\documentclass[pdflatex,sn-mathphys-num,onecolumn]{sn-jnl}

\usepackage{graphicx}
\usepackage{multirow}
\usepackage{amsmath,amssymb,amsfonts}
\usepackage{amsthm}
\usepackage{mathrsfs}
\usepackage[title]{appendix}
\usepackage{xcolor}
\usepackage{textcomp}
\usepackage{manyfoot}
\usepackage{booktabs}
\usepackage[ruled,vlined,linesnumbered]{algorithm2e}
\usepackage{listings}
\usepackage{tikz}
\usepackage{float}
\usepackage{caption}
\usetikzlibrary{arrows.meta,positioning,patterns,decorations.pathreplacing,calc}

\theoremstyle{thmstyleone}
\newtheorem{theorem}{Theorem}
\newtheorem{proposition}[theorem]{Proposition}
\theoremstyle{thmstyletwo}

\theoremstyle{thmstylethree}

\newtheorem{lemma}[theorem]{Lemma}

\begin{document}

\newgeometry{left=1in, right=1in, top=1in, bottom=1in}

\title[Flash-SemiCRF]{Streaming Structured Inference with Flash-SemiCRF}

\author*[1]{\fnm{Benjamin K.} \sur{Johnson}}\email{ben.johnson@vai.org}
\equalcont{These authors contributed equally to this work.}

\author[2]{\fnm{Thomas} \sur{Goralski}}
\equalcont{These authors contributed equally to this work.}

\author[1]{\fnm{Ayush} \sur{Semwal}}

\author[1]{\fnm{Hui} \sur{Shen}}
\makeatletter\g@addto@macro\artauthors{$^{\ddagger}$}\makeatother

\author*[2]{\fnm{H. Josh} \sur{Jang}}\email{josh.jang@vai.org}
\makeatletter\g@addto@macro\artauthors{$^{\ddagger}$}\makeatother

\makeatletter

\gdef\@equalconttext{{\centering $^{\dagger}$These authors contributed equally to this work.\\$^{\ddagger}$Co-senior authors\par}}
\makeatother

\affil[1]{\orgdiv{Department of Epigenetics}, \orgname{Van Andel Institute}, \orgaddress{\street{333 Bostwick Ave. NE}, \city{Grand Rapids}, \postcode{49503}, \state{MI}, \country{USA}}}

\affil[2]{\orgdiv{Department of Cell Biology}, \orgname{Van Andel Institute}, \orgaddress{\street{333 Bostwick Ave. NE}, \city{Grand Rapids}, \postcode{49503}, \state{MI}, \country{USA}}}

\abstract{
Semi-Markov Conditional Random Fields (semi-CRFs) assign labels to
segments of a sequence rather than to individual positions,
enabling exact inference over segment-level features and principled
uncertainty estimates at their boundaries.
However, existing implementations must materialize a large edge
potential tensor whose size grows with sequence length, maximum segment
length, and label count, becoming prohibitive for speech-scale state
spaces and intractable at genomic scales where sequences can exceed
100,000 positions. This memory bottleneck has limited the adoption of
exact segment-level inference for long sequences and large label sets.
We identify that the core inefficiency is materializing edge potentials
that can instead be evaluated on-the-fly from a compact prefix-sum
array, and make several improvements. First, replacing the stored edge
tensor with prefix-sum lookup reduces the memory footprint by a factor
proportional to the product of segment length and label count. Second,
a streaming forward-backward pass with checkpoint-boundary
normalization keeps working memory sublinear in sequence length while
preserving exact gradients. Third, zero-centered cumulative scores
control numerical drift and induce an adaptive duration prior under
label imbalance. We integrate these ideas into Flash-SemiCRF, a fused
Triton kernel that enables exact semi-CRF inference on previously
intractable problem sizes. Available at \url{https://github.com/biobenkj/flash-semicrf}.}

\keywords{Semi-CRF \textperiodcentered{} Structured prediction \textperiodcentered{} Genomic segmentation \textperiodcentered{} GPU inference \textperiodcentered{} Triton}

\maketitle

\newgeometry{left=1in, right=1in, top=1in, bottom=1in}

\clearpage

\section{Introduction}\label{sec:introduction}

Many sequence types are naturally segmented and prediction tasks need to go beyond per-position labels. Modern neural encoders learn rich cross-position dependencies, but their output layers remain per-position: the loss decomposes over individual positions, no constraint enforces that adjacent predictions form a valid segmentation, segment duration is never explicitly modeled, and there is no direct mechanism to obtain posterior marginals over segment boundaries or segment identities.

Semi-Markov conditional random fields (semi-CRFs)~\cite{sarawagi2004semicrf} formalizes discriminative segment-level prediction by scoring variable-length segments via content, duration, and transition potentials. This factorization provides three structural properties that per-position classifiers lack, such as: every position belongs to exactly one labeled segment, explicit duration modeling that captures label-dependent length distributions, and segment-level posterior marginals that quantify uncertainty over boundaries rather than individual positions. The integration of CRFs as neural network output layers~\cite{lample2016neural,ma2016endtoend, kong2016segmental,ye2018hybrid} demonstrated that structured inference and representation learning are complementary, with the CRF layer enforcing global coherence over the encoder's learned representations. Rush~\cite{rush2020pytorchstruct} provided a general-purpose semi-CRF
implementation (\texttt{pytorch-struct}), making the algorithm widely
accessible as a training and inference layer but remaining bound by the edge tensor at scale.

Semi-CRF inference requires materializing a dense edge tensor of size
$O(TKC^2)$ ($T$ positions, $K$ maximum segment duration, $C$ labels),
dominating both time and memory. For NLP tasks with $T$ in the tens to
hundreds and small $K$, this cost is manageable.
At genomic scales, modern CRF-based tools have primarily adopted
the per-position linear ($K{=}1$) case that avoids this edge tensor and the associated time and memory costs of a standard semi-CRF: Bonito~\cite{bonito} and
Dorado~\cite{dorado} decode raw nanopore signal, and
Tranquillyzer~\cite{semwal} annotates long-read RNA-seq, all at
sequence lengths routinely exceeding $10^5$. Earlier semi-CRF
applications to gene prediction -- CONRAD~\cite{decaprio2007conrad} and
CRAIG~\cite{bernal2007craig} -- relied on hand-crafted features and
CPU-bound Viterbi decode rather than serving as differentiable layers
under neural encoder training. Extending to the semi-Markov setting
multiplies the per-position cost by $K$; at genomic scales ($K \geq
10^3$), materializing the full edge tensor becomes infeasible on GPU
long before the sequential sweep over $T$ becomes the bottleneck.

Recent work from Zaratiana et al.~\cite{zaratiana2023fsemicrf} addresses this problem through pruning. Their Filtered Semi-CRF uses a local span classifier to discard irrelevant segments before running structured inference on the reduced graph. This exploits the sparsity of named entities in text, where most spans are non-entities. In contrast, for dense segmentation problems where every position belongs to a labeled segment such as those in genomics, the sparsity assumption does not typically hold and the full semi-CRF graph must be traversed.

FlashAttention~\cite{dao2022flashattention} demonstrated that for attention, the dominant cost is not arithmetic but memory access: by never materializing the $n \times n$ attention matrix and instead recomputing blocks on-chip, it achieves large wall-clock speedups despite higher FLOP counts. We apply the same IO-aware principle to semi-CRF inference. The standard implementation pre-computes and stores the full $O(TKC^2)$ edge tensor in HBM, then runs forward-backward over the stored result. At scale, this tensor dominates memory. We observe that when segment content scores decompose as sums over positions~\cite{sarawagi2004semicrf}, prefix sums reduce per-edge evaluation to $O(1)$, and the edge tensor need never be materialized. Each potential can instead be computed on-the-fly within a streaming dynamic programming (DP) kernel from an $O(TC)$ cumulative score array.

We present \textbf{Flash-SemiCRF}, a Triton-accelerated semi-CRF library built on this observation. The key ideas are:
\begin{enumerate}
\item \textbf{On-the-fly edge computation.} A prefix-sum decomposition of segment content scores enables $O(1)$-in-duration edge potential evaluation, reducing inference from memory-bound to compute-bound and dispensing with the $O(TKC^2)$ edge tensor entirely.

\item \textbf{Ring-buffer streaming.} The forward-backward algorithm operates through $K$-slot and $2K$-slot ring buffers for the forward and backward passes respectively, achieving $O(KC)$ working memory independent of sequence length $T$.

\item \textbf{Sublinear checkpointing.} Gradient checkpointing at $\sqrt{TK}$ intervals, following the recomputation paradigm of FlashAttention~\cite{dao2022flashattention} and the chunked scan of Mamba~\cite{gu2023mamba}, yields $O(\sqrt{T/K} \cdot KC)$ checkpoint memory that grows sublinearly in $T$.

\item \textbf{Globally-centered emissions.} A sequence-level baseline subtraction simultaneously stabilizes prefix sums at large $T$ and induces an adaptive, data-dependent duration prior that regularizes segmentation under label imbalance.
\end{enumerate}

We validate correctness through training convergence criteria, finite-difference gradient checks, and self-consistency between independent forward-backward implementations. We benchmark against \texttt{pytorch-struct}~\cite{rush2020pytorchstruct} and demonstrate that sparse or banded matrix representations are incompatible with semi-CRF inference due to the triangular, rather than diagonal, sparsity structure of the intermediate operators. We characterize the transition from memory-bound to compute-bound scaling as a function of $T$, $K$, and $C$. Flash-SemiCRF provides a structured decoding approach that can be applied to learned representations to produce segmentations that respect duration constraints and maintain segment-level coherence, properties that per-position classifiers typically discard.

\section{Terminological clarification}

A linear CRF (or linear-chain CRF) is a model in which every segment has duration one (K=1), so inference reduces to the standard per-position Viterbi/forward–backward recurrence. A linear scan is an algorithm that implements the semi-CRF dynamic program by sweeping left-to-right over positions, as opposed to tree-structured (parallel-scan) alternatives. The two are orthogonal: a linear scan can execute semi-CRF inference at both K=1 and $K\geq1$, and a linear CRF can in principle be computed by any backend. Throughout this paper, “linear CRF” refers to the K=1 model, while “linear scan” refers to the sequential sweep algorithm.


\section{Implementation}
\subsection{Edge Potential Decomposition}\label{sec:edge_decomposition}

The prefix-sum decomposition of segment content scores~\cite{sarawagi2004semicrf} enables $O(1)$-per-edge evaluation from an $O(TC)$ cumulative score array, reducing the memory footprint by a factor of $KC$ relative to the pre-materialized $O(TKC^2)$ edge tensor. Existing implementations use this decomposition to efficiently pre-compute the full edge tensor. We instead evaluate each potential on-the-fly within the streaming DP kernel, so the tensor is never materialized. The following subsections define the cumulative score array, the edge potential it supports, and boundary projections that extends the standard three-term decomposition.

\subsection{Cumulative Centered Scores}\label{sec:cumulative_scores}

Let $f_\theta(t, c)$ denote the raw projected encoder emissions at token $t$ for label $c$. Before constructing prefix sums, we subtract a \emph{sequence-level emission baseline}:
\begin{equation}
\label{eq:baseline}
\nu_{b,c} = \frac{1}{L_b} \sum_{u=0}^{L_b-1} f_\theta(u, c), \qquad
\bar{f}_\theta(t, c) = f_\theta(t, c) - \nu_{b,c}
\end{equation}
where $L_b$ is the true (unpadded) sequence length for batch element $b$.\newline

\noindent The cumulative scores are then defined over the centered emissions:
\begin{equation}\label{eq:cumscores}
\mathcal{S}_{0, c} = 0, \qquad \mathcal{S}_{t, c} = \sum_{u=0}^{t-1} \bar{f}_\theta(u, c)
\end{equation}

\noindent The content score for a segment $[s, s+k)$ labeled $c$ decomposes as a constant-time lookup:
\begin{equation}\label{eq:content_centered}
\mathcal{S}_{s+k, c} - \mathcal{S}_{s, c}
= \sum_{u=s}^{s+k-1} f_\theta(u, c) \;-\; \nu_{b,c} \cdot k
\end{equation}

\noindent The data-dependent term $-\nu_{b,c} \cdot k$ introduced by centering couples label identity to segment duration, inducing an adaptive duration prior described in Section~\ref{sec:centering}.

\emph{Numerical motivation.} Without centering, cumulative scores grow as $\mathcal{S}_{T,c} = O(T)$ when encoder emissions have nonzero per-label means. At genomic scale ($T > 10^5$), segment scores $\mathcal{S}_{t+k,c} - \mathcal{S}_{t,c}$ risk catastrophic cancellation, when both operands reach magnitude $O(T)$ while their difference is $O(1)$. Subtracting the per-label mean before accumulation produces a zero-mean random walk whose cumulative sums have standard deviation $O(\sqrt{T})$, reducing the dynamic range by a factor of $\sqrt{T}$.

\subsection{Edge Potentials}\label{sec:edge_potentials}

The edge potential for a segment ending at position $t$ with duration $k$, transitioning from source label $c'$ to destination label $c$, is:
\begin{equation}\label{eq:streaming_potential}
\tilde{\psi}(t, k, c, c') =
\underbrace{\bigl(\mathcal{S}_{t, c} - \mathcal{S}_{t-k, c}\bigr)}_{\text{centered content}} +
\underbrace{\mathcal{B}_{k, c}}_{\text{duration bias}} +
\underbrace{\mathcal{T}_{c', c}}_{\text{transition}}
+ \underbrace{\mathcal{P}^{\text{start}}_{t-k, c} + \mathcal{P}^{\text{end}}_{t-1, c}}_{\text{boundary}}
\end{equation}

\noindent where $k \in \{1, \ldots, K\}$ is the segment duration, $c$ is the destination label, and $c'$ is the source label. All terms in Equation~\eqref{eq:streaming_potential} are either precomputed arrays ($\mathcal{S}$, $\mathcal{B}$, $\mathcal{T}$, $\mathcal{P}$) or scalar indices, so each edge potential is evaluated in $O(1)$ time regardless of duration $k$. The $O(TC)$ cumulative score array replaces the $O(TKC^2)$ pre-materialized edge tensor, reducing the memory footprint by a factor of $KC$.

\subsection{Boundary Projections}\label{sec:boundary_projections}

The content score $\mathcal{S}_{t,c} - \mathcal{S}_{t-k,c}$ aggregates encoder emissions uniformly over all positions in a segment, but carries no information about the \emph{positions} at which segments begin or end. The transition matrix $\mathcal{T}_{c',c}$ captures label-to-label preferences but is position-independent. For applications where segment boundaries have distinctive local signatures, such as exon-intron transitions in genomic annotation, beat onsets in physiological signals, etc., explicit boundary features can improve segmentation accuracy.

The optional boundary projections score segment start and end positions directly from the encoder's hidden representation via independent linear heads, $\mathcal{P}^{\text{start}}_{t, c} = W^{\text{start}}_c \cdot \mathbf{h}_t$ and analogously for $\mathcal{P}^{\text{end}}$, with $\mathbf{h}_t \in \mathbb{R}^d$ the encoder hidden state at position $t$ and $W^{\text{start}}, W^{\text{end}} \in \mathbb{R}^{C \times d}$ learned projection matrices. A segment $[s, s+k)$ with label $c$ receives additive boundary contributions $\mathcal{P}^{\text{start}}_{s,c} + \mathcal{P}^{\text{end}}_{s+k-1,c}$.

Separate start and end projections reflect a structural property of many segmentation problems: segment entry and exit involve categorically different signals (e.g., the start of a "person" identity in language has different best indicators, the first name, than the end indicators, the last name). The separable form also has a computational advantage: each endpoint requires only a single lookup into a precomputed $(B, T, C)$ array, whereas a joint parameterization $\phi(c, \mathbf{h}_s, \mathbf{h}_{s+k-1})$ would require $O(TKC)$ precomputation over all valid endpoint pairs. The projections operate on the full encoder hidden states $\mathbf{h}_t$ rather than the label-space scores $f_\theta(t,c)$, allowing boundary features to capture context that may not be linearly decodable in label space. When disabled ($\mathcal{P}^{\text{start}} = \mathcal{P}^{\text{end}} = 0$), Equation~\eqref{eq:streaming_potential} reduces to the standard three-term decomposition.

\paragraph{Relationship to classical semi-CRF features.}
In the classical formulation~\cite{sarawagi2004semicrf}, segment features have the general form $g'_k(y_j, y_{j-1}, \mathbf{x}, t_j, u_j)$, where each feature may depend jointly on the segment label, the preceding label, the input, and both endpoint positions, at the cost of materializing the full $O(TKC^2)$ edge tensor. Our boundary projections are the streaming-compatible factorization of this capability: boundary scores depend on position $t$ and destination label $c$, but \emph{not} on segment duration $k$ or source label $c'$, restricting boundary features to the separable form $\phi(y_j, \mathbf{x}, t_j) + \phi'(y_j, \mathbf{x}, u_j)$. This is the price of the $O(KC)$ memory guarantee, $\mathcal{P}^{\text{start}}$ and $\mathcal{P}^{\text{end}}$ are precomputed before the DP loop and cannot condition on quantities only known at recurrence time.

\section{Emission Baseline Centering}\label{sec:centering}

The emission baseline subtraction (Equation~\ref{eq:baseline}) serves a dual purpose. Numerically, it controls cumulative sum magnitude as described in Section~\ref{sec:cumulative_scores}. But per-label centering is not path-invariant, and it simultaneously induces an adaptive duration prior with useful regularization properties. We analyze this modeling effect here so that the reader enters the algorithm with a complete understanding of what the edge potentials encode.

\subsection{The Adaptive Duration Prior}\label{sec:adaptive_prior}

A path-invariant centering, such as subtracting a per-position scalar $b_t = \max_c f_\theta(t,c)$ shared across labels, would cancel in the partition function because every valid segmentation covers each position exactly once. Per-label centering with $\nu_{b,c}$ does not cancel, because different segmentations assign different labels to different positions and $\nu_{b,c}$ varies across labels.

The effective segment score under centering (Equation~\ref{eq:content_centered}) is:
\begin{equation}\label{eq:effective_score}
S_{\text{eff}}(c, s, k) = \sum_{u=s}^{s+k-1} f_\theta(u, c) + \mathcal{B}_{k,c} - \nu_{b,c} \cdot k
\end{equation}

\noindent The term $-\nu_{b,c} \cdot k$ acts as a \emph{data-dependent duration prior}: labels with high average emission $\nu_{b,c}$ incur a larger penalty for long segments, while labels with low average emission receive a relative boost. This is equivalent to a canonical semi-CRF with an effective duration model $\mathcal{B}^{\text{eff}}_{k,c} = \mathcal{B}_{k,c} - \nu_{b,c} \cdot k$ that adapts per-sequence based on the global emission statistics.

This decomposition has an empirical Bayes-like feature~\cite{efron2010largescale}: $\mathcal{B}_{k,c}$ captures population-level duration priors across training sequences, while $\nu_{b,c}$ provides per-sequence adjustment. Since $\mathcal{B}_{k,c}$ can absorb the average centering effect across training, the adaptive prior primarily regularizes per-sequence variation under label imbalance.

\subsection{Worked Example and Ablation}

Table~\ref{tab:toy_centering} illustrates the asymmetric regularization under label imbalance using a synthetic sequence with three genomic labels. The duration penalty $-\nu_{b,c} \cdot k$ grows linearly in $k$, so it primarily affects long segments of the dominant label, and are often the degenerate segmentations one wishes to suppress. Rare labels with low or negative baselines incur minimal penalty or a slight boost, and their centered emissions at active positions have higher contrast ($+8.12$ for \textsc{Promoter}) than for dominant labels ($+0.75$ for \textsc{Intron}), because centering subtracts a baseline close to the active-position emission for prevalent labels.

\begin{table}[h]
\centering
\small
\begin{tabular}{l c c c c c}
\hline
Label $c$ & $f_\theta$ (active) & $f_\theta$ (inactive) & Prevalence & $\nu_{b,c}$ & $-\nu_{b,c} \cdot 100$ \\
\hline
\textsc{Intron}   & $+4.0$ & $-1.0$ & 85\% & $+3.25$ & $-325$ \\
\textsc{Exon}     & $+5.0$ & $-0.5$ & 14\% & $+0.27$ & $-27$ \\
\textsc{Promoter} & $+8.0$ & $-0.2$ & 1\%  & $-0.12$ & $+12$ \\
\hline
\end{tabular}
\caption{Asymmetric regularization under label imbalance ($T = 10{,}000$, $C = 3$).
The baseline $\nu_{b,c}$ is the weighted average of active and inactive emissions.
The dominant label (\textsc{Intron}) receives a strong duration penalty ($-325$ at $k{=}100$), while the rare label (\textsc{Promoter}) receives a slight duration \emph{bonus} ($+12$) because its baseline is negative. Centered emissions at active positions ($f_\theta^{(\text{active})} - \nu_{b,c}$) are $+0.75$ (\textsc{Intron}), $+4.73$ (\textsc{Exon}), $+8.12$ (\textsc{Promoter})--centering amplifies signal-to-background contrast for rare labels.}
\label{tab:toy_centering}
\end{table}

In ablation experiments (Table~\ref{tab:implicit-prior}) on synthetic label-imbalanced sequences ($C = 3$, $T = 2{,}000$, proportions 75/15/10\%), Viterbi decoding under mean centering recovered 59 segments of the rarest label versus 19 under path-invariant centering, and 64 versus 35 for the second-rarest, while the dominant label's segment count decreased from 142 to 90. Under balanced proportions (0.33 each), segment counts converge across centering modes.

\begin{table}[h]
\centering
\small
\begin{tabular}{l r r r r r r r r}
\toprule
 & & \multicolumn{3}{c}{Duration penalty $-\nu \cdot k$}
 & \multicolumn{2}{c}{Imbalanced} & \multicolumn{2}{c}{Balanced} \\
\cmidrule(lr){3-5} \cmidrule(lr){6-7} \cmidrule(lr){8-9}
Label & $\nu_{b,c}$ & $k{=}10$ & $k{=}25$ & $k{=}50$
 & \textsc{mean} & \textsc{none}
 & \textsc{mean} & \textsc{none} \\
\midrule
  INTRON & 2.021 & -20.21 & -50.53 & -101.05 & 90 & 142 & 80 & 84 \\
  EXON & -0.413 & 4.13 & 10.32 & 20.64 & 64 & 35 & 80 & 80 \\
  UTR & -0.603 & 6.03 & 15.07 & 30.14 & 59 & 19 & 76 & 77 \\
\bottomrule
\end{tabular}
\caption{Implicit duration prior from \textsc{mean} centering under label imbalance.
  The per-label temporal mean $\nu_{b,c}$ creates a duration penalty $-\nu_{b,c} \cdot k$
  that penalizes long segments of prevalent labels.
  Right columns show decoded segment counts: under imbalance (75/15/10\%),
  mean centering redistributes segments from the dominant to rare labels;
  under balanced proportions (0.33 each), the effect vanishes.
  Separately, cumulative score magnitudes at $T{=}50{,}000$ confirm that
  mean centering grows as $O(\sqrt{T})$ (peak$/\sqrt{T} \approx 1.28$)
  versus $O(T)$ without centering (peak$/\sqrt{T} \approx 484$).}
\label{tab:implicit-prior}
\end{table}

\emph{Architectural context.} When the semi-CRF serves as an output decoder, the encoder can learn representations calibrated to the centering, and the adaptive prior provides robustness to emission scale miscalibration during early training. When semi-CRF marginals serve as input features for a downstream model, a path-invariant centering ($b_t = \max_c f_\theta(t,c)$) is preferable to preserve the canonical semi-CRF distribution.

\section{Streaming Forward-Backward Algorithm}\label{sec:streaming_algorithm}

We now present the streaming forward-backward algorithm that maintains only $O(KC)$ working memory through ring buffers, with gradient checkpointing at $\Delta \approx \sqrt{TK}$ intervals. The design follows the tiling and recomputation pattern of FlashAttention~\cite{dao2022flashattention} and the chunked parallel scan of Mamba~\cite{gu2023mamba}, adapted to the semi-Markov recurrence structure.

\subsection{Forward Scan with Ring Buffer}\label{sec:forward_scan}

Algorithm~\ref{alg:streaming_forward} maintains a ring buffer $\boldsymbol{\alpha} \in \mathbb{R}^{K \times C}$ storing the $K$ most recent log-forward messages (Figure~\ref{fig:memory_layout}). At each position $t$, the recurrence looks back at most $K$ positions, so a ring buffer indexed by $t \bmod K$ suffices--the oldest entry is always the one about to be overwritten.

At checkpoint boundaries (every $\Delta$ positions), the algorithm normalizes the ring buffer by subtracting the current maximum, accumulating the shift into $\mathcal{N}_{\text{accum}}$. This prevents the log-forward messages from drifting to extreme magnitudes over long sequences~\cite{dao2022flashattention}. The full ring buffer state $\Omega_i$ and cumulative normalizer $\mathcal{N}_i$ are saved at each checkpoint for use in the backward pass. For variable-length batches, the normalization shift is masked to zero for positions beyond the true sequence length $L_b$, preventing phantom normalization of padding positions.

\begin{algorithm}[h]
\caption{Streaming Semi-CRF Forward Scan}\label{alg:streaming_forward}
\KwIn{$\mathcal{S}$, $\mathcal{T}$, $\mathcal{B}$, $\mathcal{P}^{\text{start}}$, $\mathcal{P}^{\text{end}}$ (optional), checkpoint interval $\Delta$}
\KwOut{$\log Z$, checkpoints $(\Omega, \mathcal{N})$}
\BlankLine
$\boldsymbol{\alpha} \gets -\infty$; \quad $\boldsymbol{\alpha}[0, :] \gets 0$; \quad $\mathcal{N}_{\text{accum}} \gets 0$\;
$\Omega[0] \gets \boldsymbol{\alpha}$; \quad $\mathcal{N}[0] \gets 0$\;
\BlankLine
\For{$t \gets 1$ \KwTo $T$}{
    $\mathbf{v}_t \gets -\infty \in \mathbb{R}^C$\;
    \For{$k \gets 1$ \KwTo $\min(K, t)$}{
        $\tilde{\boldsymbol{\alpha}}_{\text{prev}} \gets \boldsymbol{\alpha}[(t-k) \bmod K, :]$\;
        $\mathbf{h} \gets \bigl(\mathcal{S}_{t,:} - \mathcal{S}_{t-k,:}\bigr) + \mathcal{B}_{k,:} + \mathcal{P}^{\text{start}}_{t-k,:} + \mathcal{P}^{\text{end}}_{t-1,:}$\;
        $\mathbf{E} \gets \mathbf{h}[:, \text{None}] + \mathcal{T}^\top$\;
        $\mathbf{s}_k \gets \text{LSE}(\tilde{\boldsymbol{\alpha}}_{\text{prev}}[\text{None}, :] + \mathbf{E}, \text{axis}=1)$\;
        $\mathbf{v}_t \gets \text{LSE}(\mathbf{v}_t, \mathbf{s}_k)$\;
    }
    $\boldsymbol{\alpha}[t \bmod K, :] \gets \mathbf{v}_t$\;
    \BlankLine
    \If(\tcp*[f]{Normalize at checkpoint~\cite{dao2022flashattention}}){$t \bmod \Delta = 0$}{
        $n \gets t / \Delta$\;
        $s_b \gets \begin{cases} \max_c \mathbf{v}_{t,b}(c) & \text{if } t \le L_b \\ 0 & \text{otherwise} \end{cases}$\;
        $\mathcal{N}_{\text{accum}} \gets \mathcal{N}_{\text{accum}} + s_b$; \quad $\boldsymbol{\alpha} \gets \boldsymbol{\alpha} - s_b$\;
        $\Omega[n] \gets \boldsymbol{\alpha}$; \quad $\mathcal{N}[n] \gets \mathcal{N}_{\text{accum}}$\;
    }
}
\BlankLine
$\log Z \gets \text{LSE}(\boldsymbol{\alpha}[T \bmod K, :]) + \mathcal{N}_{\text{accum}}$\;
\Return{$\log Z$, $(\Omega, \mathcal{N})$}\;
\end{algorithm}

\subsection{Backward Scan and the $2K$ Ring Buffer}\label{sec:backward_scan}

The backward pass processes checkpoint segments in reverse order. For each segment $[i\Delta, (i{+}1)\Delta)$, the algorithm restores $\boldsymbol{\alpha}$ from the saved checkpoint $\Omega_i$, recomputes the local forward messages within the segment, then scans $\boldsymbol{\beta}$ right-to-left, computing marginals and accumulating gradients at each position (Figure~\ref{fig:memory_layout}b). The saved normalizer $\mathcal{N}_i$ restores the true scale of $\boldsymbol{\alpha}$ when computing marginals via $\log \mu = \tilde{\alpha} + \tilde{\psi} + \tilde{\beta} + \mathcal{N}_i - \log Z$. The full backward algorithm is given in Appendix~\ref{sec:appendix_streaming}.

While the forward pass uses a ring buffer of size $K$ (sufficient since we only look back $K$ positions), the backward pass requires a ring buffer of size $2K$. This asymmetry arises from a write-before-read hazard in the backward access pattern: at position $t$ we \emph{read} $\beta$ values from positions $\{t{+}1, \ldots, t{+}K\}$ and \emph{write} $\beta[t]$, so with $K$ slots the write at $t$ collides with the read at $t{+}K$ (both map to $t \bmod K$), overwriting $\beta[t{+}K]$ before it is read. Doubling the buffer to $2K$ slots separates write and read indices for the entire lookback window, eliminating the conflict. The additional $KC$ float memory overhead is negligible compared to the $O(\sqrt{T/K} \cdot KC)$ checkpoint storage.

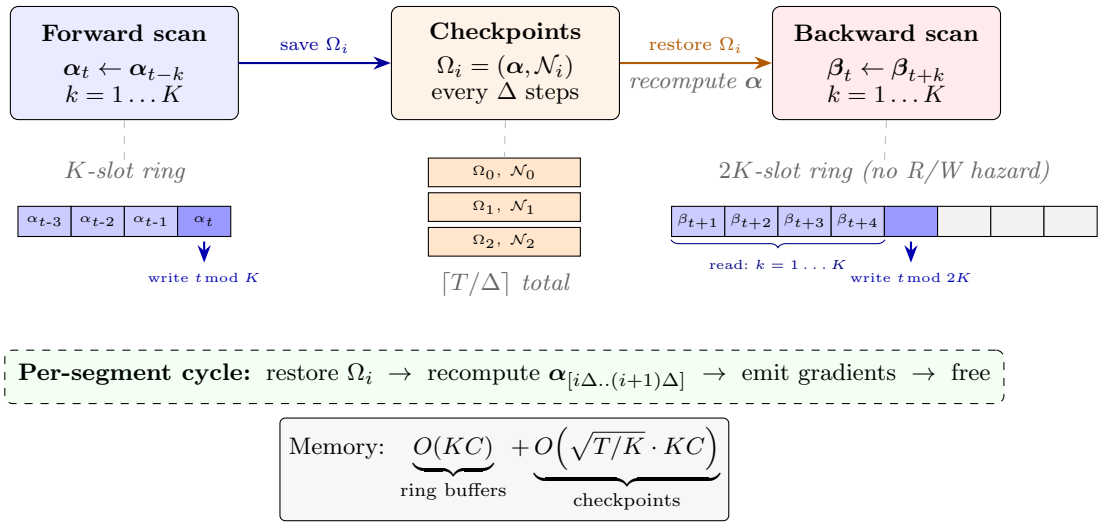
\begin{figure}[h]
\centering
\begin{tikzpicture}[
    >=Stealth,
    block/.style={draw, rounded corners=3pt, minimum width=3.0cm, minimum height=1.5cm,
                  align=center, font=\small},
    arrow/.style={->, thick, >=Stealth},
    dataarrow/.style={arrow, blue!60!black},
    ckptarrow/.style={arrow, orange!70!black},
    lbl/.style={font=\footnotesize, align=center},
    annot/.style={font=\scriptsize\itshape, text=black!60},
    slot/.style={draw, minimum width=0.7cm, minimum height=0.4cm, inner sep=0pt, font=\tiny},
    slotfill/.style={slot, fill=blue!20},
    slotwrite/.style={slot, fill=blue!40},
    slotckpt/.style={slot, fill=orange!20},
    slotempty/.style={slot, fill=black!5},
    connector/.style={draw, thin, black!25, dashed},
]
 
\node[block, fill=blue!8] (fwd) at (0, 0) {
    \textbf{Forward scan}\\[2pt]
    $\boldsymbol{\alpha}_{t} \leftarrow \boldsymbol{\alpha}_{t-k}$\\[-1pt]
    {\scriptsize $k = 1 \ldots K$}
};
 
\node[block, fill=orange!10, right=2.0cm of fwd] (ckpt) {
    \textbf{Checkpoints}\\[2pt]
    $\Omega_i = (\boldsymbol{\alpha}, \mathcal{N}_i)$\\[-1pt]
    {\scriptsize every $\Delta$ steps}
};
 
\node[block, fill=red!8, right=2.0cm of ckpt] (bwd) {
    \textbf{Backward scan}\\[2pt]
    $\boldsymbol{\beta}_{t} \leftarrow \boldsymbol{\beta}_{t+k}$\\[-1pt]
    {\scriptsize $k = 1 \ldots K$}
};
 
\draw[dataarrow] (fwd.east) -- (ckpt.west)
    node[midway, above, lbl] {save $\Omega_i$};
\draw[ckptarrow] (ckpt.east) -- (bwd.west)
    node[midway, above, lbl] {restore $\Omega_i$}
    node[midway, below, annot] {recompute $\boldsymbol{\alpha}$};
 
 
\def\insetlabely{-1.45}
\def\insetsloty{-2.1}
 
\node[annot] at (0, \insetlabely) {$K$-slot ring};
\foreach \i/\sty in {0/slotfill, 1/slotfill, 2/slotfill, 3/slotwrite} {
    \pgfmathsetmacro{\xoff}{-1.05 + \i*0.7}
    \node[\sty] (fring\i) at (\xoff, \insetsloty) {};
}
\node[font=\tiny] at (fring0) {$\alpha_{t\text{-}3}$};
\node[font=\tiny] at (fring1) {$\alpha_{t\text{-}2}$};
\node[font=\tiny] at (fring2) {$\alpha_{t\text{-}1}$};
\node[font=\tiny] at (fring3) {$\alpha_{t}$};
\draw[->, blue!70!black, thick] (fring3.south) ++(0,-0.06) -- ++(0,-0.28)
    node[below, font=\tiny, blue!70!black] {write $t\!\bmod K$};
\draw[connector] (fwd.south) -- (0, {\insetlabely + 0.15});
 
\pgfmathsetmacro{\ckptx}{5.0}
\foreach \i/\yoff in {0/0, 1/-0.46, 2/-0.92} {
    \pgfmathsetmacro{\ypos}{\insetlabely + \yoff}
    \node[slotckpt, minimum width=2.0cm, minimum height=0.38cm] (cslot\i) at (\ckptx, \ypos) {};
}
\node[font=\tiny] at (cslot0) {$\Omega_0,\; \mathcal{N}_0$};
\node[font=\tiny] at (cslot1) {$\Omega_1,\; \mathcal{N}_1$};
\node[font=\tiny] at (cslot2) {$\Omega_2,\; \mathcal{N}_2$};
\node[annot, anchor=north] at ([yshift=-0.08cm]cslot2.south) {$\lceil T/\Delta \rceil$ total};
\draw[connector] (ckpt.south) -- (cslot0.north);
 
\pgfmathsetmacro{\bwdx}{10.0}
\node[annot] at (\bwdx, \insetlabely) {$2K$-slot ring (no R/W hazard)};
\foreach \i/\sty in {0/slotfill, 1/slotfill, 2/slotfill, 3/slotfill, 4/slotwrite, 5/slotempty, 6/slotempty, 7/slotempty} {
    \pgfmathsetmacro{\xoff}{\bwdx - 2.45 + \i*0.7}
    \node[\sty] (bring\i) at (\xoff, \insetsloty) {};
}
\node[font=\tiny] at (bring0) {$\beta_{t\text{+}1}$};
\node[font=\tiny] at (bring1) {$\beta_{t\text{+}2}$};
\node[font=\tiny] at (bring2) {$\beta_{t\text{+}3}$};
\node[font=\tiny] at (bring3) {$\beta_{t\text{+}4}$};
\draw[decorate, decoration={brace, amplitude=3pt, raise=1pt, mirror}, blue!50!black]
    (bring0.south west) -- (bring3.south east)
    node[midway, below=5pt, font=\tiny, blue!50!black] {read: $k = 1 \ldots K$};
\draw[->, blue!70!black, thick] (bring4.south) ++(0,-0.06) -- ++(0,-0.28)
    node[below, font=\tiny, blue!70!black] {write $t\!\bmod 2K$};
\draw[connector] (bwd.south) -- (\bwdx, {\insetlabely + 0.15});
 
\node[draw, dashed, rounded corners=3pt, fill=green!5,
      align=center, font=\scriptsize, inner sep=5pt,
      anchor=north]
    (cycle) at (5.0, -3.8)
    {\textbf{Per-segment cycle:}\;
     restore $\Omega_i \;\to\;$ recompute $\boldsymbol{\alpha}_{[i\Delta..(i{+}1)\Delta]}$
     $\;\to\;$ emit gradients $\;\to\;$ free};
 
\node[draw, rounded corners=2pt, fill=black!3, font=\scriptsize, align=left,
      anchor=north]
    (mem) at (5.0, -4.7)
    {Memory:\; $\underbrace{O(KC)}_{\text{ring buffers}}
     + \underbrace{O\!\left(\sqrt{T/K}\cdot KC\right)}_{\text{checkpoints}}$};
 
\end{tikzpicture}
\caption{Memory layout for streaming semi-CRF inference.
The forward scan maintains a $K$-slot ring buffer indexed by $t \bmod K$,
checkpointing the full buffer state $\Omega_i$ and cumulative log-normalizer
$\mathcal{N}_i$ every $\Delta$ positions.
During the backward pass, each segment is processed independently:
$\boldsymbol{\alpha}$ is restored from $\Omega_i$ and recomputed locally,
while $\boldsymbol{\beta}$ is scanned right-to-left through a $2K$-slot
ring buffer that prevents the write-before-read hazard
(Section~\ref{sec:backward_scan}).
Total memory is $O(KC)$ for ring buffers plus
$O(\sqrt{T/K} \cdot KC)$ for checkpoints, sublinear in $T$.}
\label{fig:memory_layout}
\end{figure}

\subsection{Complexity}\label{sec:complexity}

Table~\ref{tab:complexity} summarizes the time and memory complexity of the streaming algorithm. The dominant working memory is the $O(KC)$ ring buffer, which is independent of sequence length $T$. Checkpoints contribute $O(\sqrt{T/K} \cdot KC)$, which grows sublinearly in $T$. The total time is $O(TKC^2)$, the same as a standard semi-CRF
forward-backward pass. The dominant memory reduction is replacing
the $O(TKC^2)$ edge tensor with the $O(TC)$ prefix-sum array (a
factor of $KC$). The ring buffer further reduces forward-message
storage from $O(TC)$ to $O(KC)$, independent of sequence length.
At $T = 10^6$, $K = 200$, $C = 6$, the edge tensor would require
${\sim}29$\,GB in float32; the prefix-sum array requires
${\sim}24$\,MB, and the ring buffer ${\sim}10$\,KB.

\begin{table}[h]
\centering
\begin{tabular}{l c c}
\hline
\textbf{Component} & \textbf{Time} & \textbf{Memory} \\
\hline
Prefix-sum array & $O(TC)$ & $O(TC)$ \\
Forward scan & $O(TKC^2)$ & $O(KC)$ ring buffer \\
Checkpoints & -- & $O(\sqrt{T/K} \cdot KC)$ \\
Backward scan & $O(TKC^2)$ & $O(KC)$ ring buffer ($2K$ slots) \\
Alpha recompute & $O(TKC^2)$ & $O(\Delta \cdot C)$ per segment \\
Gradient workspace & -- & $O(B \cdot N_{\text{ckpt}} \cdot KC^2)$ \\
\hline
\textbf{Total} & $O(TKC^2)$ &
    $O(KC + \sqrt{T/K} \cdot KC)$\textsuperscript{$\dagger$} \\
\hline
\end{tabular}
\caption{Complexity analysis. The DP working memory (ring buffers) is
$O(KC)$, independent of $T$; checkpoint memory grows as $O(\sqrt{T})$.
The $O(TC)$ prefix-sum array is the dominant memory term but replaces
the $O(TKC^2)$ edge tensor, a reduction by factor $KC$.
$N_{\text{ckpt}} = \lceil T/\Delta \rceil$, $B$ is batch size.
$\dagger$Working memory only; excludes the $O(TC)$ prefix-sum array
(precomputed, not part of the streaming DP) and the $O(B \cdot N \cdot KC^2)$
gradient workspace.}
\label{tab:complexity}
\end{table}

\subsection{Adaptive Loop Tiling}\label{sec:loop_tiling}
The marginal computation requires a $(C_{\text{PAD}} \times
C_{\text{PAD}})$ matrix per $(t, k)$ pair, which at $C_{\text{PAD}} =
64$ demands approximately 384 registers per thread. On our L40S GPUs
(65,536 registers per SM), this exceeds the available register file at
$>$4 warps and causes spilling to slow local memory. We tile the destination label dimension in blocks of size $\tau$, reducing peak register demand to ${\sim}120$ per thread and enabling 4--8 warps without spilling. The $\beta$ reduction across tiles uses the online logsumexp pattern from FlashAttention~\cite{dao2022flashattention}. The tile size $\tau$ is selected adaptively based on $C$ to balance compile time, register pressure, and iteration count (Table~\ref{tab:tile_sizing}).

\begin{table}[h]
\centering
\small
\begin{tabular}{c c c l}
\hline
$C_{\text{PAD}}$ & $\tau$ & Iterations & Rationale \\
\hline
$\leq 8$ & 4 & 2 & Minimal iteration count \\
$\leq 16$ & 8 & 2 & Minimal iteration count \\
$32$ & 16 & 2 & Balanced \\
$64$ & 16 & 4 & Moderate register pressure \\
$\geq 128$ & 32 & $\leq 8$ & Bounded compile time \\
\hline
\end{tabular}
\caption{Adaptive tile size selection. The algorithm bounds iteration count to $\leq 8$ even at $C = 256$ while keeping register pressure manageable.}
\label{tab:tile_sizing}
\end{table}

\subsection{Correctness validation.}\label{sec:correctness_validation}
Because the Triton kernel uses a different floating-point reduction order than the PyTorch reference, bitwise agreement is not expected. Instead, we validate through three strategies (Table~\ref{tab:correctness-supp}). \emph{Self-consistency:} boundary and emission marginals from the backward pass must satisfy the following probabilistic invariants, 1) marginals bounded as $P(c \mid t) \in [0,1]$, 2) class marginals normalized as $\sum_c P(c \mid t) = 1$ at each position, 3) total mass conserved as $\sum_{t,c} P(c \mid t) = T$, and 4) zero mass at padded positions. All invariants hold at genome scale ($B{=}142$, $T{=}100{,}000$, $C{=}24$, $K{=}100$) with per-position deviation of $1.6 \times 10^{-4}$. \emph{Finite-difference gradient checking:} for each element of $\mathcal{S}$ (cumulative scores), $\mathcal{T}$ (transition matrix), and $\mathcal{B}$ (duration bias), we perturb by $\pm\varepsilon$
($\varepsilon = 10^{-3}$) and compute the central-difference approximation
$(f(x{+}\varepsilon) - f(x{-}\varepsilon)) / 2\varepsilon$ using only
forward-pass evaluations. This numerical gradient, independent of the backward
kernel, serves as ground truth against which the Triton autograd output is
compared. All parameters achieve cosine similarity $\geq 0.9999$ and
normalized maximum error $< 5 \times 10^{-5}$, confirming that the custom
backward kernels produce correct gradients. \emph{Training convergence comparison:} models initialized with the same random seeds were trained with the streaming PyTorch, streaming Triton, and \texttt{pytorch-struct} linear scan backends reaching the same final negative log-likelihood (relative difference $< 10^{-9}$) with loss-curve cosine similarity of $1.0$ after 100 epochs, while the Triton backend runs approximately $5.7\times$ faster.

\begin{table}[h]
\centering
\small
\begin{tabular}{l r r r r}
\toprule
\multicolumn{5}{l}{\textbf{(a) Self-consistency -- large scale ($B{=}4$, $T{=}2,000$, $C{=}32$, $K{=}50$)}} \\
\midrule
\multicolumn{2}{l}{Invariant} & \multicolumn{3}{l}{Observed} \\
\midrule
  \multicolumn{2}{l}{Boundary marginals $\in [0, 1]$} & \multicolumn{3}{l}{$[0.9699,\; 1.0000]$} \\
  \multicolumn{2}{l}{Expected segments $\sum_t p(\mathrm{bdy} \mid t) \in [1, T]$} & \multicolumn{3}{l}{$\bar{x} = 1940.6$, range $[1940.6,\; 1940.6]$} \\
  \multicolumn{2}{l}{$\sum_c P(c \mid t) = 1$} & \multicolumn{3}{l}{max dev.\ $= 1.5\times 10^{-6}$} \\
  \multicolumn{2}{l}{$\sum_{t,c} P(c \mid t) \approx T$} & \multicolumn{3}{l}{$2{,}000 \pm 1.5\times 10^{-3}$ (0.0001\%)} \\
  \multicolumn{2}{l}{Emission marginals $\geq 0$} & \multicolumn{3}{l}{observed $\min = 0.0179$} \\
  \multicolumn{2}{l}{Padding-mask consistency} & \multicolumn{3}{l}{all padded positions $= 0$} \\
\midrule
\multicolumn{5}{l}{\textbf{(b) Self-consistency -- genome scale ($B{=}142$, $T{=}100,000$, $C{=}24$, $K{=}100$)}} \\
\midrule
\multicolumn{2}{l}{Invariant} & \multicolumn{3}{l}{Observed} \\
\midrule
  \multicolumn{2}{l}{Boundary marginals $\in [0, 1]$} & \multicolumn{3}{l}{$[0.9597,\; 1.0000]$} \\
  \multicolumn{2}{l}{Expected segments $\sum_t p(\mathrm{bdy} \mid t) \in [1, T]$} & \multicolumn{3}{l}{$\bar{x} = 96085.4$, range $[96085.3,\; 96085.5]$} \\
  \multicolumn{2}{l}{$\sum_c P(c \mid t) = 1$} & \multicolumn{3}{l}{max dev.\ $= 1.6\times 10^{-4}$} \\
  \multicolumn{2}{l}{$\sum_{t,c} P(c \mid t) \approx T$} & \multicolumn{3}{l}{$100{,}000 \pm 8.10$ (0.0081\%)} \\
  \multicolumn{2}{l}{Emission marginals $\geq 0$} & \multicolumn{3}{l}{observed $\min = 0.0208$} \\
  \multicolumn{2}{l}{Padding-mask consistency} & \multicolumn{3}{l}{all padded positions $= 0$} \\
\midrule
\multicolumn{5}{l}{\textbf{(c) Finite-difference gradient validation}} \\
\midrule
Parameter & Elements & Cosine sim. & Norm.\ max err & Max $|\Delta|$ \\
\midrule
  cum\_scores & 1,600 & 1.000000 & $2.1\times 10^{-5}$ & $1.5\times 10^{-6}$ \\
  transition & 256 & 1.000000 & $4.6\times 10^{-6}$ & $2.3\times 10^{-6}$ \\
  duration\_bias & 400 & 1.000000 & $2.6\times 10^{-6}$ & $1.7\times 10^{-5}$ \\
\midrule
\multicolumn{5}{l}{\textbf{(d) Training convergence}} \\
\midrule
Backend & Final NLL & Rel.\ diff & Curve cos. & Time (s) \\
\midrule
  \textsc{pytorch} & 1606.71 & -- & -- & 810 \\
  \textsc{triton} & 1606.71 & $6.4\times 10^{-10}$ & 1.000000 & 141 \\
  \textsc{linear scan} & 1606.71 & $1.2\times 10^{-9}$ & 1.000000 & 697 \\
\bottomrule
\end{tabular}
\captionsetup{justification=justified, singlelinecheck=false}
\caption{Full correctness validation of the Triton Semi-CRF implementation.
  \textbf{(a)}~Self-consistency invariants at large scale.
  \textbf{(b)}~Self-consistency invariants at genome scale.
  \textbf{(c)}~Finite-difference gradient validation: central differences ($\varepsilon{=}1.0\times 10^{-3}$) vs.\ autograd ($B{=}1$, $T{=}100$, $C{=}16$, $K{=}25$).
  \textbf{(d)}~Training convergence across backends (100 epochs, $B{=}4$, $T{=}500$, $C{=}32$, $K{=}50$). Performed on an NVIDIA L40S GPU.}
\label{tab:correctness-supp}
\end{table}

\clearpage
\section{Experiments}\label{sec:experiments}

We evaluate the ability of Flash-SemiCRF to perform exact semi-Markov inference across a range of values of $T$, $K$, and $C$, scaling from NER-sized problems to genome-scale sequences, measuring time, memory, and throughput. We further show that the natural candidates for exploiting bounded K -- banded or block-sparse matrix formats -- cannot prevent the near-dense intermediates that arise under composition in a parallel-scan formulation. Next, we benchmark linear CRF (K=1) and semi-CRF (K=30) on the DARPA TIMIT speech corpus~\cite{timit}. Before presenting scaling benchmarks, we address whether tree-structured alternatives could exploit the bounded K via banded kernels.

\subsection{Infeasibility of Banded Backends for Tree-Structured Inference}
\label{sec:banded_infeasibility}

Bounded segment length $K$ suggests that tree-structured backends could exploit sparsity via banded or block-sparse kernels. Two structural results show they cannot (full proofs in Appendix~\ref{sec:banded_is_not_viable}). First, at a tree node of span $S$, the feasible duration constraint $d_1 + d_2 \le S$ is \emph{anti-diagonal triangular}, not diagonal-banded; the clique formed by all states with $d \le \lfloor S/2 \rfloor$ forces a permutation-independent bandwidth lower bound of $C\lfloor S/2 \rfloor - 1$, so the compatibility matrix is near-dense once $S \gtrsim K$. Second, even genuinely banded one-step operators fill in under composition: $\mathrm{bw}(B^m) = \min(T, mK)$, saturating at dense width after $O(\log(T/K))$ tree levels. Empirically, bandwidth-reducing permutations yield ratios $\mathrm{bw}_{\mathrm{best}}/(n{-}1) \ge 0.97$ for $S \ge K$, and block-triangular formats that match the triangular geometry did not yield speedups due to indirect indexing overhead. Consequently, banded storage cannot prevent the near-dense intermediates that drive the high memory requirements in tree-structured formulations, motivating the streaming approach of Section~\ref{sec:streaming_algorithm}. Similarly, \texttt{pytorch-struct} implements a heuristic of $KC > 200$ state space switch to use the linear scan backend instead of the default tree search approach.

\subsection{Benchmarking}\label{sec:benchmarking}
\paragraph{Time and Memory}
We benchmark semi-CRF backends across $T$, $K$, and $C$ (Fig.~\ref{fig:benchmarks}). Tree-based methods (binary tree, memory sharded tree) achieve the lowest per-position latency at small state sizes but degrade rapidly as $(K{+}1)C$ increases (Fig.~\ref{fig:benchmarks}A,\,B), reflecting the cost of materializing $O(T(KC)^2)$ intermediates. Linear and streaming scans exhibit stable, state-size-independent compute scaling; the fused Triton kernel matches this stability while improving throughput over unfused linear scans through on-the-fly edge evaluation (Fig.~\ref{fig:benchmarks}E).

The dominant constraint at scale is memory, not compute. Linear and streaming scans maintain a small, constant memory footprint regardless of $T$, while tree-based and block-structured methods require orders of magnitude more memory (Fig.~\ref{fig:benchmarks}C). This separation is most visible in the out-of-memory (OOM) frontier (Fig.~\ref{fig:benchmarks}D,\,F): tree-based methods support progressively smaller state sizes as $T$ grows, whereas the fused kernel maintains a nearly constant maximum state size. Eliminating edge tensor materialization fundamentally alters the scaling behavior of semi-CRF inference.

To assess performance in regimes relevant to genomics-scale segmentation (large $T$ and $K$), we further evaluate the fused Triton implementation at extended sequence lengths and segment sizes (\ref{fig:benchmarks}G).In this setup, the tradeoff between memory and per-position runtime for K=2000–8000 is driven by hardware-specific constraints, likely due to L1 cache saturation, and will vary across systems. Further, in this setup where the competing methods are no longer tractable due to memory constraints, the fused kernel maintains near-linear runtime scaling in $K$ and a stable memory footprint, demonstrating practical applicability to kilobase-scale segmentation tasks.

\begin{figure}
        \centering
    \includegraphics[width=1\linewidth]{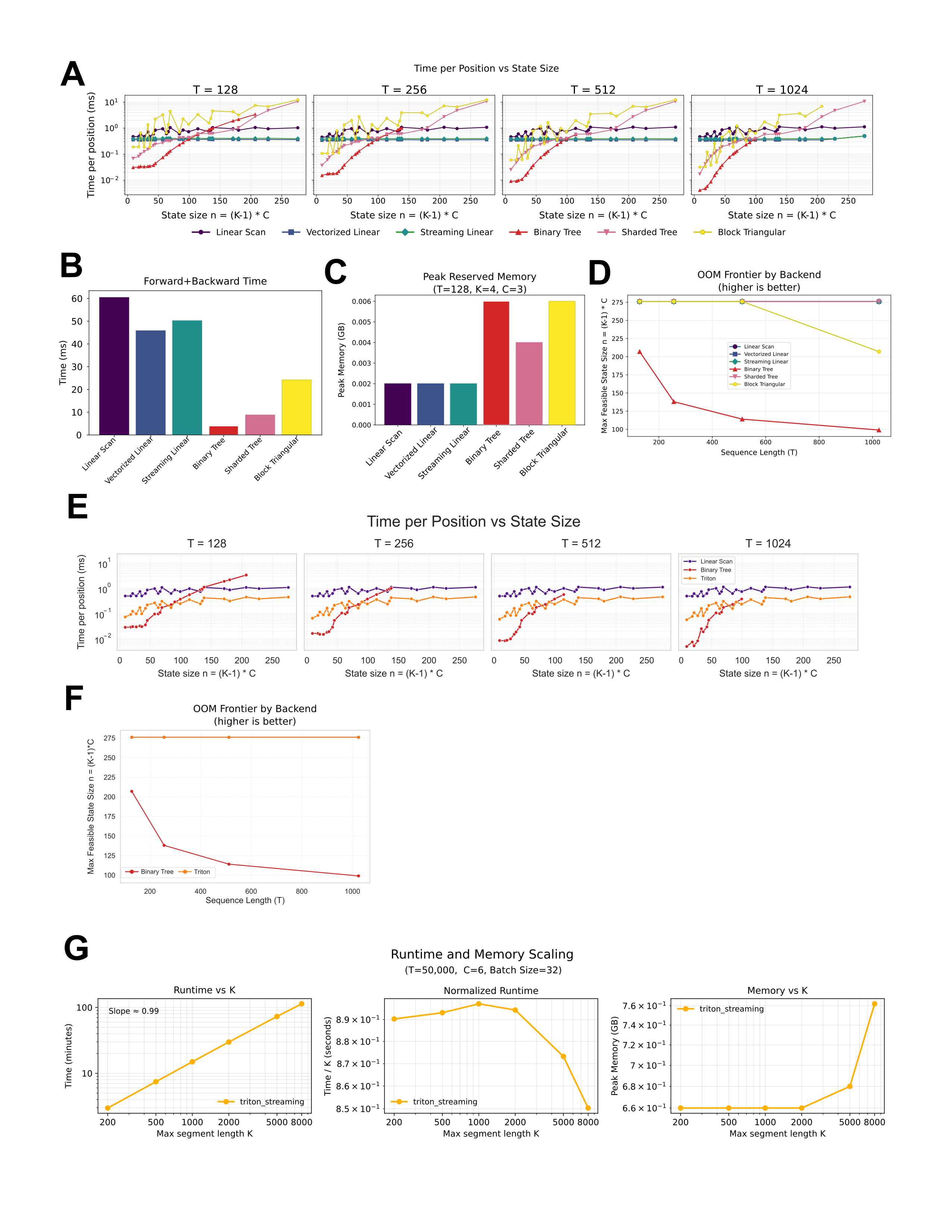}

        \caption{\textit{Theoretical Benchmarks}. \textbf{A.} Compute time per positions relationship to model state size between backends across different sizes T. \textbf{B.} Compute time for forward and backwards pass between each backend.   \textbf{C.} Peak Memory utilization between backends at T=128, K=4, C=3. \textit{D.} Mapping the out of memory (OOM) frontier for each backend across different sizes of T. \textbf{E.} Comparing fused Triton kernel optimization to original linear scan and binary tree backends (pytorch-struct) for compute time per position across different sizes T \textbf{F.} Demonstrating Triton implementation maintains linear memory requirements across sizes T while Binary tree is able to be used as a viable compute backend. \textbf{G.} Demonstration of triton streaming backend performance at scales T=50,000, C=6, batch size=32 and variable size K. } 
    \label{fig:benchmarks}
\end{figure}

\paragraph{DARPA TIMIT Speech Corpus}
To evaluate Flash-SemiCRF as a training and inference layer on a real segmentation task, we benchmark
on the DARPA TIMIT acoustic-phonetic corpus~\cite{timit}, a standard testbed for
structured sequence models in speech recognition. TIMIT is a natural fit for semi-CRFs because
phonemes have characteristic, label-dependent durations (vowels persist longer than stops, silence
segments are highly variable) that a duration-aware model can exploit. TIMIT provides
630 speakers (462 train, 168 test) with time-aligned phoneme annotations at 61 phone classes,
collapsed to the standard 39-phone set ($C = 39$, $C_\mathrm{PAD} = 64$)
following the Lee and Hon convention \cite{lee1989speaker}. Input features are 13 MFCCs with delta and delta-delta
coefficients (39 dimensions total) extracted at 10\,ms frame shift, z-score normalized using
training-set statistics.

\paragraph{Experimental setup.}
Both models share an identical 3-layer bidirectional LSTM encoder (hidden dimension 256, dropout 0.3)
followed by the Flash-SemiCRF structured decoder, trained with AdamW ($\eta = 10^{-3}$, weight
decay $10^{-5}$) and cosine annealing over 50 epochs (batch size 32). CRF parameters receive L2
regularization ($\lambda = 0.01$) and gradient clipping at norm 5.0. The partition function is
computed in float64 for numerical stability. Learnable scalar sequence boundary parameters
$\pi^{\mathrm{start}}, \pi^{\mathrm{end}} \in \mathbb{R}^C$ are folded into the cumulative scores
via the prefix-sum trick described in Appendix ~\ref{sec:boundaries}. The only difference between models is the
maximum segment duration: $K = 1$ (linear CRF) versus $K = 30$ (semi-CRF). This adds $K \times C =
1{,}170$ duration bias parameters to the semi-CRF, negligible relative to the encoder's ${\sim}1.6$M
parameters. At $C_\mathrm{PAD} = 64$, the per-checkpoint gradient workspace
(Section~\ref{sec:appendix_streaming}.4.4) requires ${\sim}120$\,MB for a batch of 32, well within the regime where workspace
memory is eclipsed by the edge tensor savings.

\paragraph{Convergence.}
Figure~\ref{fig:timit_convergence} shows training convergence across three metrics. Both models
converge to similar final Phone Error Rates (PER\,$= 0.218$ for $K{=}30$ vs.\ $0.219$ for $K{=}1$;
best PER $0.2178$ vs.\ $0.2187$ at epoch~43), consistent with the expectation that a strong
neural encoder absorbs much of the segmentation structure into its learned representations,
compressing the marginal contribution of explicit duration modeling to corpus-level PER. The
semi-CRF maintains a small but consistent advantage in boundary
F1 ($0.476$ vs.\ $0.468$) and segment F1 ($0.215$ vs.\ $0.207$), reflecting improved
segmentation coherence from explicit duration modeling.

The semi-CRF starts behind ($\mathrm{PER} = 0.95$ at epoch~0 vs.\ $0.89$ for $K{=}1$), requiring
additional warmup to explore the larger state space, but overtakes the linear CRF by epoch~5 on PER
and by epoch~1 on boundary and segment F1. The PER gap peaks near epoch~20
($\Delta \approx 0.005$) and compresses to ${\sim}0.001$ as both models converge, suggesting that
harder tasks with larger $T$ or $C$ may exhibit a more pronounced semi-CRF advantage.

Validation loss is consistently higher for the semi-CRF ($206.8$ vs.\ $183.7$ at epoch~50) despite
its lower PER. This is expected: the semi-CRF partition function sums over a combinatorially larger
space of valid $(K, C)$ segmentations, distributing probability mass more broadly than the per-frame
linear CRF. The discrepancy underscores that the negative log-likelihood and the MAP segmentation
accuracy measure distinct quantities.

\begin{table}[h]
    \centering
    \begin{tabular}{lcc}
        \toprule
        \textbf{Metric} & \textbf{$K{=}1$ (Linear CRF)} & \textbf{$K{=}30$ (Semi-CRF)} \\
        \midrule
        Phone Error Rate (PER) $\downarrow$       & 0.219 & \textbf{0.218} \\
        Boundary F1 $\uparrow$                    & 0.468 & \textbf{0.476} \\
        Boundary F1 $\pm$2 frames $\uparrow$           & 0.914 & \textbf{0.919} \\
        Segment F1 $\uparrow$                     & 0.207 & \textbf{0.215} \\
        \midrule
        Boundary entropy $H_{\mathrm{bdy}}$       & 5.687 (invariant) & \textbf{5.529} \\
        Position entropy $H_{\mathrm{pos}}$       & 3.651              & 3.651 \\
        Validation NLL                            & \textbf{183.7}     & 206.8 \\
        \bottomrule
    \end{tabular}
    \caption{TIMIT benchmark results ($C = 39$, 50 epochs, best checkpoint by PER at epoch~43).
    Both models share the same BiLSTM encoder (${\sim}1.6$M parameters); the semi-CRF adds
    $K \times C = 1{,}170$ duration bias parameters. Boundary entropy for the linear CRF is
    structurally fixed: $p(\mathrm{boundary} \mid t) = 1$ when $K{=}1$.}
    \label{tab:timit}
\end{table}

\begin{table}[t]
    \centering
    \begin{tabular}{llccccc}
        \toprule
        \textbf{Backend} & \textbf{Model} & $K$ & \textbf{Train (s/ep)} & \textbf{Infer (s)}
            & \textbf{utt/s} & \textbf{kfr/s} \\
        \midrule
        pytorch-struct linear scan  & Semi-CRF   & 30 & 4{,}430 & 1{,}243 & 0.8  & 0.3 \\
        Flash-SemiCRF (Triton)      & Semi-CRF   & 30 & 175     & 7       & 21.1 & 6.5 \\
        Flash-SemiCRF (PyTorch)     & Linear CRF &  1 & 27      & 32      & 140  & 42.8 \\
        \midrule
        \multicolumn{3}{l}{\textit{Speedup (Flash $K{=}30$ vs.\ pytorch-struct)}}
            & 25$\times$ & 178$\times$ & 26$\times$ & 22$\times$ \\
        \bottomrule
    \end{tabular}
    \caption{Wall-clock training and inference time on TIMIT (single NVIDIA L40S GPU,
    $B = 32$, $C = 39$). Flash-SemiCRF achieves a 25$\times$ training speedup and
    178$\times$ inference speedup over pytorch-struct's linear scan, the only existing
    backend that can run $K{=}30$ at this scale without OOM (tree backends exceed
    memory at $(K{+}1)C = 1{,}209$; see Fig.~\ref{fig:benchmarks}D). All three backends compute the same
    partition function (Table~\ref{tab:correctness-supp}d).}
    \label{tab:timit_timing}
\end{table}

\paragraph{Throughput.}
Table~\ref{tab:timit_timing} compares wall-clock training and inference time on a single L40S GPU.
The \texttt{pytorch-struct} linear scan~\cite{rush2020pytorchstruct} requires 4{,}430\,s per training epoch and
1{,}243\,s per inference pass at $K{=}30$, making a 50-epoch experiment a multi-day commitment.
Flash-SemiCRF reduces this to 175\,s per epoch (25$\times$ speedup) and 7\,s per inference pass
(178$\times$), processing 6.5k frames/s versus 0.3k.
Despite the $30\times$ increase in theoretical $O(TKC^2)$ complexity from $K{=}1$ to $K{=}30$, training the semi-CRF takes only $6.5\times$ longer per epoch: 175\,s versus 27\,s.

Inference time for $K{=}1$ (32\,s) is paradoxically slower than for $K{=}30$ (7\,s).
The $K{=}1$ CRF forward pass is fast (as reflected in the 27\,s training time, which is dominated
by forward-backward); the bottleneck is the unfused PyTorch Viterbi decode, which runs outside the
Triton kernel pipeline. The $K{\geq}3$ path dispatches to the fused Triton kernel for both
forward-backward and Viterbi (Appendix~\ref{sec:triton_k3}), avoiding this overhead. A fused $K{=}1$ Triton
Viterbi was not prioritized since the training loop, where kernel fusion provides the dominant
speedup, already dispatches efficiently at $K{=}1$.

\paragraph{Boundary entropy and structural differences in uncertainty.}
The clearest separation between models appears in boundary entropy (Figure~\ref{fig:timit_convergence}). Complementing the per-position label entropy $H_{\mathrm{pos}}$ (the sequence-averaged form of the frame-level class entropy used by Salvi~\cite{salvi2006boundary} for phonetic boundary detection), we define a sequence-level diagnostic as the Shannon entropy~\cite{shannon1948} of the normalized boundary distribution:
\begin{equation}
H_{\mathrm{bdy}} = -\sum_{t=1}^{T} \hat{p}_t \ln \hat{p}_t, \qquad \hat{p}_t = \frac{p(\mathrm{boundary} \mid t)}{\sum_{t'} p(\mathrm{boundary} \mid t')},
\end{equation}
where $p(\mathrm{boundary} \mid t)$ is the posterior segment-start probability from the forward-backward algorithm, and $\exp(H_{\mathrm{bdy}})$ gives the effective number of boundary positions. For $K{=}1$, every frame is its own segment, so $\hat{p}_t = 1/T$ and $H_{\mathrm{bdy}} = \ln T$ per utterance regardless of learned parameters; the linear CRF does not represent boundary uncertainty.

Averaged over the test corpus, the $K{=}1$ invariant is $\mathbb{E}[\ln T] = 5.687$ (geometric mean length $\approx 295$ frames). The semi-CRF reduces $H_{\mathrm{bdy}}$ from $5.671$ at initialization to $5.529$ by epoch~50, concentrating probability mass at true phoneme transitions; the effective boundary count $\exp(H_{\mathrm{bdy}})$ drops from ${\sim}295$ to ${\sim}252$. Per-position label entropy converges to the same value for both models ($H_{\mathrm{pos}} = 3.651$), indicating that the encoder representations are comparably expressive and the difference lies in the structural decoder.

\paragraph{Per-utterance uncertainty analysis.}
The corpus-level PER gap ($\Delta = 0.001$) averages over utterances where the models perform comparably and utterances where they diverge. Figure~\ref{fig:timit_uncertainty} shows posterior marginals $p(c \mid t)$ and boundary posteriors for DR4\_MPCS0\_SI729 ($T = 480$, 39 ground-truth segments), where the semi-CRF achieves PER\,$= 0.232$ versus the linear CRF's $0.411$, a 44\% relative reduction concentrated in regions of phonetic ambiguity. Both models activate similar label hypotheses, but the semi-CRF concentrates posterior mass more tightly along the correct label rows. The boundary posterior panel makes the structural difference explicit: the linear CRF assigns $p(\mathrm{boundary} \mid t) = 1.0$ uniformly (expected at $K{=}1$), while the semi-CRF's boundary posterior rises at true phoneme boundaries (red dashed lines) and falls to $p < 0.2$ in segment interiors. The semi-CRF thus provides a posterior over segmentation structure that a per-position model cannot represent, useful for downstream tasks requiring boundary uncertainty.

\begin{figure}[h]
    \centering
    \includegraphics[width=\textwidth]{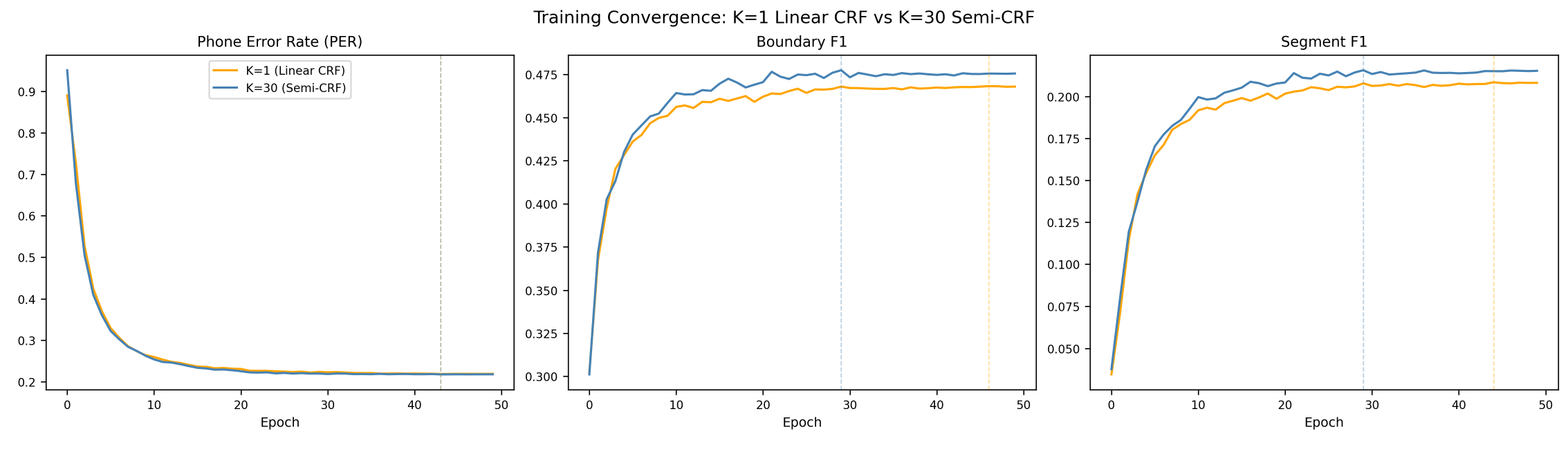}
    \caption{Training convergence on TIMIT: $K{=}1$ linear CRF vs.\ $K{=}30$ semi-CRF.
    Both models share an identical BiLSTM encoder; the only difference is the maximum
    segment duration. The semi-CRF maintains a similar Phone Error Rate (PER) as the linear CRF, but consistent improvement at
    boundary and segment F1 despite starting from a higher initial PER (larger state
    space requires additional warmup).}
    \label{fig:timit_convergence}
\end{figure}

\begin{figure}[t]
    \centering
    \includegraphics[width=\textwidth]{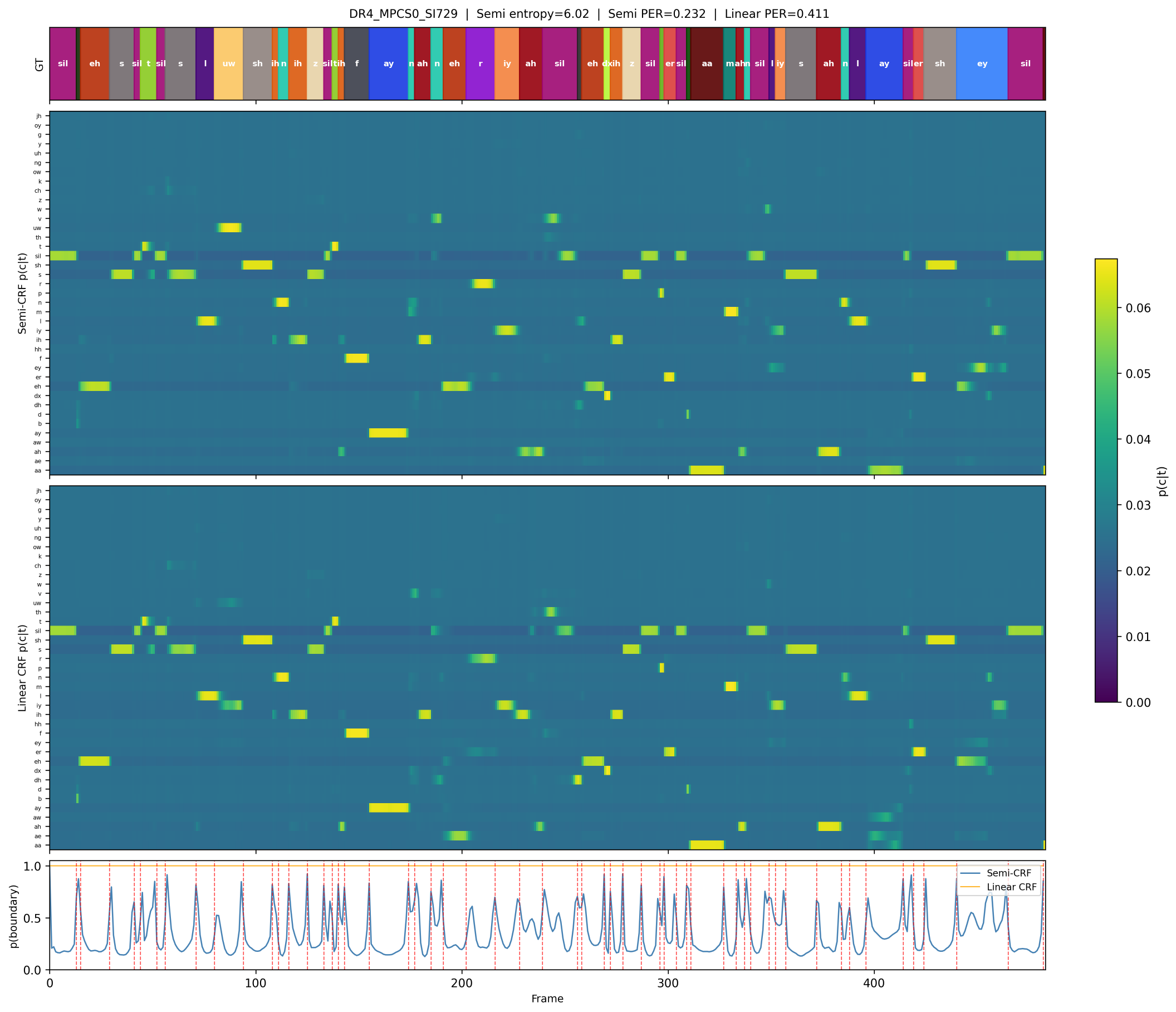}
    \caption{Per-utterance posterior analysis on TIMIT utterance DR4\_MPCS0\_SI729
    ($T = 480$ frames). \textbf{Top:} Ground-truth (GT) phoneme segmentation.
    \textbf{Middle panels:} Position-marginal heatmaps $p(c \mid t)$ for the
    semi-CRF ($K{=}30$, PER\,$= 0.232$) and linear CRF ($K{=}1$, PER\,$= 0.411$).
    Both models activate similar label hypotheses, but the semi-CRF concentrates
    posterior mass more tightly on correct labels.
    \textbf{Bottom:} Boundary posterior $p(\mathrm{boundary} \mid t)$.
    The semi-CRF (blue) varies across the utterance, peaking at true phoneme boundaries
    (red dashed lines) and suppressing in segment interiors. The linear CRF (orange)
    assigns $p(\mathrm{boundary} \mid t) = 1.0$ uniformly, the expected behavior when
    every frame is its own segment ($K{=}1$).}
    \label{fig:timit_uncertainty}
\end{figure}


\clearpage
\section{Discussion}\label{sec:discussion}

The central barrier to scaling semi-Markov Conditional Random Fields has not been their asymptotic complexity, but the memory constraints of the standard implementations. Existing approaches materialize an $O(TKC^2)$ edge tensor, making inference memory bound and infeasible on long sequences. By eliminating this tensor through prefix-sum decomposition and streaming dynamic programming, Flash-SemiCRF reframes semi-CRF inference as a compute-bound problem. This shift mirrors recent advances such as FlashAttention and scan-based sequence models, where avoiding intermediate materialization, not just changing asymptotics, enables orders of magnitude improvements in scale.

This change in computational approach has direct modeling consequences. Semi-CRFs, long limited to short sequences in domains such as natural language processing, now become viable as structured decoders for long-range sequence modeling. This addresses a persistent limitation of modern neural sequence models: while they learn rich contextual representations, they typically operate under a per-position prediction paradigm that does not enforce globally consistent segmentations. In contrast, semi-CRFs ensure coherent partitions of the input and model the label-dependent duration explicitly. With Flash-SemiCRF, these properties can now be applied at scale, enabling segmentation tasks to combine learned representations with explicit structural constraints (such as event detection, temporal action segmentation, or domain parsing). Both linear and semi-CRF models have complementary strengths that make them useful in different parts of a genomics workflow. Linear CRFs are computationally efficient and well-suited for annotating known sequence structures (e.g., Tranquillyzer, Bonito/Dorado). In contrast, semi-CRFs provide modeling of segment structure and boundary uncertainty, which can be advantageous for more complex annotation tasks, albeit at the cost of increased computational overhead due to the larger context space and exact marginals.

The edge potential decomposition (\ref{sec:boundary_projections}) also has consequences for gradient flow and encoder design. Because the boundary projections $\mathcal{P}^{\text{start}}$, $\mathcal{P}^{\text{end}}$ and the content emissions $f_\theta$ enter as separate additive terms, their gradients are structurally decoupled: the Jacobian is block-diagonal in ($W^{\text{start}}$, $W^{\text{end}}$) vs. $f_\theta$, so parameter updates can adjust boundary and content scoring independently. In a joint edge tensor parameterization these error signals are entangled, which can slow convergence when boundary and interior features have different characteristic scales or noise profiles. The additive decomposition also provides a natural interface for multi-modal encoders. Modalities that produce punctate, position-specific signals (e.g., Tn5 insertion densities from ATAC-seq) can be routed to the boundary projections via modality-specific heads on $\mathbf{h}_t$, while broad-domain signals (e.g., aggregate expression counts, histone mark coverage) feed the content term through $f_\theta$ and accumulate evidence via the prefix sum. This routing is possible precisely because the terms are precomputed independently; it would not be available in a parameterization where boundary and content features are jointly materialized.

Beyond computational efficiency, the formulation introduces a subtle but important statistical effect. The sequence-level centering of emissions induces an adaptive, data-dependent duration prior $\mathcal{B}^{\text{eff}}_{k,c} = \mathcal{B}_{k,c} - \nu_{b,c} \cdot k$. This acts as an implicit regularizer: labels with high average emissions incur stronger penalties for long segments, while low-prevalence labels receive a relative boost. In imbalanced settings, this mechanism suppresses degenerate segmentations dominated by frequent states and improves recovery of rare classes. At the same time, because this transformation is not path-invariant, it alters the underlying distribution and may be undesirable in settings requiring strict probabilistic interpretation. The choice between centered and uncentered formulations therefore reflects a trade-off between numerical stability, regularization, and model fidelity.

We note a limitation regarding the state space size $C$. To
ensure deterministic gradients and eliminate inter-segment atomic
contention on the GPU, Flash-SemiCRF allocates a gradient workspace of
size $O(B \cdot N_{\text{ckpt}} \cdot K \cdot C^2)$. While this
introduces a quadratic dependence on the number of labels, genomic
segmentation tasks, parsing promoters, UTRs, exons, introns, and
intergenic regions, rarely require $C > 64$. Within this regime, the
workspace memory remains low and is entirely eclipsed by the savings
from avoiding the $O(TKC^2)$ edge tensor materialization. The
implementation trades asymptotic scaling in $C$ for transformative
scaling in the dimensions that define genomic workloads ($T$ and $K$).

A more consequential constraint in practice is the maximum segment
duration $K$. Segments longer than $K$ cannot receive a unified duration
or content score and are instead captured through $K$-position lookback
and boundary projections alone, sacrificing the full segment-level
modeling that motivates the semi-CRF. In human genomics, median
protein-coding exons (${\sim}130$~bp) and most introns
(${\sim}1{,}539$~bp median) fall well within practical $K$ values, but
the 95th percentile intron length (${\sim}22{,}908$~bp) approaches the
regime where runtime and register pressure costs become significant
(Section~\ref{sec:loop_tiling}). Selecting $K$ to cover the segments of
greatest modeling interest is therefore the primary tuning decision, and
characterizing how segmentation quality degrades as features extend past
$K$ is an important direction for future work.

From a systems perspective, the results highlight a broader principle: structured models are often dismissed as unscalable due to nominal complexity, when in practice the limiting factor is memory traffic and tensor materialization. By converting semi-CRF inference into a fused, streaming kernel, we demonstrate that exact structured inference can achieve high throughput and full hardware utilization at sequence lengths exceeding $10^6$. This suggests that other structured models based on dynamic programming over variable-length segments, such as segmental
RNNs~\cite{kong2016segmental}, may admit similar reformulations.

\backmatter

\bmhead{Acknowledgements}

We gratefully acknowledge the feedback on this work from Brad Dickson. Computation for the work described in this paper was supported by the High Performance Cluster and Cloud Computing (HPC3) Resource at the Van Andel Research Institute.

\section*{Declarations}

\begin{itemize}
\item \textbf{Funding:} Support for this work was provided by VAI.
\item \textbf{Conflict of interest:} The authors declare no competing interests.
\item \textbf{Code availability:} [https://github.com/biobenkj/flash-semicrf]
\item \textbf{Author contribution:} B.K.J., T.G., A.S., H.S., and H.J.J. conceived the project. B.K.J. and T.G. implemented and bench marked the code base. B.K.J. and T.G. wrote the manuscript with contributions from A.S., H.S., and H.J.J.. H.S. and H.J.J. provided co-supervision and funding for this project.
\end{itemize}

\begin{appendices}
\numberwithin{equation}{section}
\numberwithin{table}{section}
\numberwithin{figure}{section}
\numberwithin{algocf}{section}

\section{Streaming Backend Supplement}\label{sec:appendix_streaming}
\let\origsection\section
\let\origsubsection\subsection
\let\origsubsubsection\subsubsection
\let\section\subsection
\let\subsection\subsubsection
\let\subsubsection\paragraph
%
%

\section{Notation}\label{supp:notation}

Table~\ref{tab:notation} collects all symbols used in the main text and supplement.

\begin{table}[ht]
\begin{tabular}{c l l}
\hline
\textbf{Symbol} & \textbf{Description} & \textbf{Shape} \\
\hline
$T$ & Padded sequence length (batch dimension) & scalar \\
$L_b$ & True (unpadded) length for batch element $b$ & scalar \\
$K$ & Maximum segment duration & scalar \\
$C$ & Number of labels (states) & scalar \\
$C_{\text{PAD}}$ & Padded label count (next power of 2) & scalar \\
$B$ & Batch size & scalar \\
$\mathcal{S}_{t,c}$ & Cumulative centered scores & $(B, T+1, C)$ \\
$f_\theta(t,c)$ & Raw projected encoder emissions & $(B, T, C)$ \\
$\nu_{b,c}$ & Sequence-level emission baseline & $(B, C)$ \\
$\bar{f}_\theta(t,c)$ & Centered emissions: $f_\theta(t,c) - \nu_{b,c}$ & $(B, T, C)$ \\
$\mathcal{T}_{c',c}$ & Transition scores (source $c'$ to dest $c$) & $(C, C)$ \\
$\mathcal{B}_{k,c}$ & Duration bias for duration $k$, label $c$ & $(K, C)$ \\
$\mathcal{P}^{\text{start}}_{t,c}$ & Start boundary projection (optional) & $(B, T, C)$ \\
$\mathcal{P}^{\text{end}}_{t,c}$ & End boundary projection (optional) & $(B, T, C)$ \\
$\tilde{\alpha}_t(c)$ & Log-forward message at position $t$, label $c$ & $(B, C)$ \\
$\tilde{\beta}_t(c)$ & Log-backward message & $(B, C)$ \\
$\boldsymbol{\alpha}$ & Forward ring buffer & $(B, K, C)$ \\
$\boldsymbol{\beta}$ & Backward ring buffer & $(B, 2K, C)$ \\
$\Omega$ & Checkpointed ring buffer states & $(B, N_{\text{ckpt}}, K, C)$ \\
$\mathcal{N}$ & Cumulative log-normalization factors & $(B, N_{\text{ckpt}})$ \\
$\Delta$ & Checkpoint interval ($\approx \sqrt{TK}$) & scalar \\
$\tau$ & Tile size for label dimension & scalar \\
$\log Z$ & Log partition function & $(B,)$ \\
\hline
\end{tabular}
\caption{Notation. Tilde ($\tilde{\cdot}$) denotes log-domain quantities. $N_{\text{ckpt}} = \lceil T / \Delta \rceil$ is the number of checkpoints.}
\label{tab:notation}
\end{table}

\textbf{Indexing conventions.}
Positions $t \in \{0, \ldots, T\}$ index \emph{boundaries} between tokens; cumulative scores $\mathcal{S}_{t,c}$ are defined at boundaries, with $\mathcal{S}_{0,c} = 0$.
Emissions $f_\theta(t, c)$ are defined at tokens $t \in \{0, \ldots, T{-}1\}$ (0-indexed).
Segments are half-open intervals $[s, s{+}k)$ covering tokens $s, s{+}1, \ldots, s{+}k{-}1$, so that the content score is $\mathcal{S}_{s+k, c} - \mathcal{S}_{s, c}$.
Durations are 1-indexed: $k \in \{1, \ldots, K\}$, stored at 0-indexed positions $k{-}1$ in code.

\section{Backward Pass with Checkpointing}\label{supp:backward}

The backward pass processes checkpoint segments in reverse order, recomputing $\boldsymbol{\alpha}$ from saved checkpoints following the gradient checkpointing strategy described in the main text. Algorithm~\ref{alg:streaming_backward} gives the full pseudocode. The saved normalizer $\mathcal{N}_i$ restores the true scale of $\boldsymbol{\alpha}$ when computing marginals.

\begin{algorithm}[t]
\caption{Streaming Semi-CRF Backward Pass}\label{alg:streaming_backward}
\KwIn{$\mathcal{S}$, $\mathcal{T}$, $\mathcal{B}$, $\mathcal{P}^{\text{start}}$, $\mathcal{P}^{\text{end}}$ (optional), checkpoints $(\Omega, \mathcal{N})$, $\log Z$, upstream $\partial \mathcal{L}/\partial Z$}
\KwOut{$\nabla \mathcal{S}$, $\nabla \mathcal{T}$, $\nabla \mathcal{B}$}
\BlankLine
$\boldsymbol{\beta} \gets -\infty \in \mathbb{R}^{2K \times C}$; \quad $\boldsymbol{\beta}[T \bmod 2K, :] \gets 0$\;
Initialize gradient accumulators\;
\BlankLine
\For{$i \gets N_{\text{ckpts}}-1$ \KwTo $0$}{
    $t_{\text{start}} \gets i \cdot \Delta$; \quad $t_{\text{end}} \gets \min((i+1) \cdot \Delta, T)$\;
    $\mathcal{N}_i \gets \mathcal{N}[i]$\;
    $\boldsymbol{\alpha}_{\text{local}} \gets$ \textsc{RecomputeAlpha}($\Omega[i]$, $t_{\text{start}}$, $t_{\text{end}}$)\;
    \BlankLine
    \For{$t \gets t_{\text{end}}-1$ \KwTo $t_{\text{start}}$}{
        $\tilde{\boldsymbol{\alpha}}_t \gets \boldsymbol{\alpha}_{\text{local}}[t - t_{\text{start}}, :]$\;
        $\tilde{\boldsymbol{\beta}}_t \gets -\infty$\;
        \BlankLine
        \For{$k \gets 1$ \KwTo $\min(K, T-t)$}{
            $\tilde{\boldsymbol{\beta}}_{\text{next}} \gets \boldsymbol{\beta}[(t+k) \bmod 2K, :]$\tcp*[r]{$2K$ ring buffer}
            $\tilde{\psi} \gets$ edge potential (main text, Eq.~\eqref{eq:streaming_potential})\;
            \BlankLine
            \tcp{Log-norm correction restores true $\alpha$ scale}
            $\log \mu \gets \tilde{\boldsymbol{\alpha}}_t[\text{None}, :] + \tilde{\psi} + \tilde{\boldsymbol{\beta}}_{\text{next}}[:, \text{None}] + \mathcal{N}_i - \log Z$\;
            $\mu \gets \exp(\log \mu)$\;
            \BlankLine
            Accumulate $\nabla \mathcal{S}$, $\nabla \mathcal{T}$, $\nabla \mathcal{B}$ from $\mu$\;
            $\tilde{\boldsymbol{\beta}}_t \gets \text{LSE}(\tilde{\boldsymbol{\beta}}_t, \text{LSE}(\tilde{\psi} + \tilde{\boldsymbol{\beta}}_{\text{next}}[:, \text{None}], \text{axis}=0))$\;
        }
        $\boldsymbol{\beta}[t \bmod 2K, :] \gets \tilde{\boldsymbol{\beta}}_t$\tcp*[r]{$2K$ ring buffer}
    }
}
\Return{$\nabla \mathcal{S}$, $\nabla \mathcal{T}$, $\nabla \mathcal{B}$}\;
\end{algorithm}

\section{Gradient Computation}\label{supp:gradients}

Gradients follow by the standard exponential-family identity: $\nabla_\theta \log Z$ equals the expected sufficient statistics under $p_\theta$, computed from the joint segment marginals $\mu(t,k,c,c') = \exp(\tilde{\alpha}_{t-k}(c') + \tilde{\psi} + \tilde{\beta}_t(c) + \mathcal{N}_i - \log Z)$ produced by the forward-backward pass. The implementation challenges lie not in these standard expressions but in accumulating them efficiently on GPU, which we address next.

\section{Reduced Atomics: Full Algorithms}\label{supp:reduced_atomics}

\subsection{Reduced Atomics Strategy}
GPU atomic operations are expensive and introduce non-determinism from floating-point non-associativity. For per-position gradients $\nabla \mathcal{S}_{t,c}$, we accumulate all $k$ and tile contributions into thread-local registers before a single atomic write, reducing atomics from $K \times \text{tiles}$ to 1 per position (e.g., $4{,}000 \to 1$ at $K{=}1000$, $C{=}64$). For shared parameters $\nabla \mathcal{T}$ and $\nabla \mathcal{B}$, we allocate per-checkpoint-segment workspace buffers that eliminate inter-segment atomic contention entirely, with a final deterministic \texttt{einsum} reduction on the host.

\subsection{Local Accumulation for Per-Position Gradients}

For $\nabla \mathcal{S}_{t,c}$, the negative contribution (from segments starting at $t$) is the same position for all $k$ values. Instead of $K \times \text{tiles}$ atomic operations, we accumulate locally:

\begin{algorithm}[H]
\caption{Local Accumulation for $\nabla \mathcal{S}_t$}\label{alg:local_accum}
$\texttt{grad\_cs\_t\_local} \gets \mathbf{0} \in \mathbb{R}^{C_{\text{PAD}}}$\;
\For{$k \gets 1$ \KwTo $K$}{
    \For{\text{each tile}}{
        $\texttt{grad\_cs\_t\_local} \mathrel{-}= \sum_{c'} \mu_{\text{tile}}(t, k, c, c')$\tcp*[r]{Register accumulation}
    }
}
$\texttt{atomic\_add}(\nabla \mathcal{S}_t, \texttt{grad\_cs\_t\_local})$\tcp*[r]{Single write}
\end{algorithm}

\textbf{Speedup}: $K \times \text{tiles} \to 1$ atomic per position (e.g., $1000 \times 4 = 4000 \to 1$ at $K=1000$, $C=64$).

\subsection{Per-Duration Accumulation for Duration Bias}

For $\nabla \mathcal{B}_{k,c}$, we accumulate across all tiles for each $k$, then write once:

\begin{algorithm}[H]
\caption{Per-Duration Accumulation for $\nabla \mathcal{B}_k$}
\For{$k \gets 1$ \KwTo $K$}{
    $\texttt{grad\_db\_k\_local} \gets \mathbf{0} \in \mathbb{R}^{C_{\text{PAD}}}$\;
    \For{\text{each tile}}{
        $\texttt{grad\_db\_k\_local} \mathrel{+}= \sum_{c'} \mu_{\text{tile}}(t, k, c, c')$\;
    }
    $\texttt{atomic\_add}(\nabla \mathcal{B}_k, \texttt{grad\_db\_k\_local})$\tcp*[r]{Once per $k$}
}
\end{algorithm}

\textbf{Speedup}: $\text{tiles} \to 1$ atomic per $(t, k)$ pair.

\subsection{Segment-Isolated Workspace Buffers}

For shared parameters ($\nabla \mathcal{T}$, $\nabla \mathcal{B}$), cross-segment atomic contention introduces non-determinism. We allocate per-segment workspace buffers:

\begin{equation}
\texttt{grad\_tr\_workspace} \in
    \mathbb{R}^{B \times N_{\text{segments}} \times K
               \times C_{\text{PAD}} \times C_{\text{PAD}}}
\end{equation}

\noindent (Padded to $C_{\text{PAD}}$ to prevent out-of-bounds access from masked Triton threads; sliced back to $C$ before reduction.)

Each checkpoint segment writes to its own slice, eliminating inter-segment atomics. The final reduction uses deterministic host-side operations:

\begin{align}
\nabla \mathcal{T}_{k,c',c}
    &= \texttt{einsum}(\text{``bskij, b} \to \text{kij''},
       \texttt{workspace.sum(dim=1)}, \nabla_{\text{out}}) \\
\nabla \mathcal{B}_{k,c}
    &= \texttt{einsum}(\text{``bskc, b} \to \text{kc''},
       \texttt{workspace.sum(dim=1)}, \nabla_{\text{out}})
\end{align}

\textbf{Determinism}: The segment-wise sum is deterministic (fixed order), and \texttt{einsum} performs a single reduction pass.

\textbf{Memory cost}: $O(B \cdot N_{\text{ckpt}} \cdot K \cdot C^2)$ workspace, which is acceptable since $N_{\text{segments}} \approx \sqrt{T/K}$ is typically small (e.g., 10--100 for $T=100\text{k}$, $K=1000$).

\section{Online Logsumexp for Tiled $\beta$ Reduction}\label{supp:online_lse}

The adaptive loop tiling described in the main text tiles the destination label dimension in blocks of size $\tau$. Since the $\beta$ reduction spans multiple tiles, we use the standard online logsumexp pattern from FlashAttention~\cite{dao2022flashattention} (tracking a running maximum and rescaled sum of exponentials across tile boundaries) to compute exact logsumexp without materializing the full $C_{\text{PAD}} \times C_{\text{PAD}}$ matrix.

\textbf{Two-pass tile normalization for marginals.} Marginal computation (Algorithm~\ref{alg:streaming_backward}, line~7) also tiles across destination labels. A naive approach would normalize each tile independently, but different tiles would use different local maxima, causing inconsistent marginals and 10--400\% gradient errors when $C > \tau$. The implementation uses a two-pass strategy: Pass~1 sweeps all tiles to compute a single global maximum $m^* = \max_{c} (\tilde{\boldsymbol{\alpha}}_t + \tilde{\psi} + \tilde{\boldsymbol{\beta}})$; Pass~2 reprocesses all tiles using $m^*$ as a fixed normalizer, ensuring every tile produces marginals on the same scale.

\section{Additional Numerical Stability Measures}\label{supp:stability}

Beyond emission baseline centering and checkpoint boundary normalization (both described in the main text), the implementation employs several additional stability measures.

\textbf{Log-domain.} All forward-backward computations use logsumexp: $\text{LSE}(\mathbf{x}) = \max(\mathbf{x}) + \log\sum_i \exp(x_i - \max(\mathbf{x}))$.

\textbf{NEG\_INF guards.} When all logsumexp inputs are $-10^9$, the subtraction $x - \max(x) = 0$ instead of remaining at $-\infty$. Guards detect this case ($\max < -10^9 + 1$) and return $-\infty$ directly.

\textbf{Float64 accumulation.} Gradient tensors for shared parameters use float64 to prevent non-determinism from \texttt{atomic\_add} floating-point non-associativity. Error scales as $O(\sqrt{T \times K \times C})$ per operation; float64 reduces this from ${\sim}10^{-3}$ (float32) to ${\sim}10^{-10}$ (negligible).

\textbf{Log-scale clamping.} The log-scale factor $\log s = \max(\boldsymbol{\alpha}) + \mathcal{N}_i - \log Z$ is clamped to $[-700, 0]$ before $\exp()$. The upper bound of $0$ reflects that segment marginals are probabilities ($s \le 1$); the lower bound of $-700$ prevents float64 underflow ($\exp(-710) \approx 0$).

\textbf{Intermediate clamping.} Before computing the joint log-probability $\tilde{\boldsymbol{\alpha}}_t + \tilde{\psi} + \tilde{\boldsymbol{\beta}}$, each term is clamped to $[-10^6, 10^6]$. This prevents float overflow when three large-magnitude terms are summed, without affecting valid values (which are $O(\sqrt{T})$ after checkpoint normalization).

\textbf{Masking.} Invalid positions use $-10^9$ (not $-\infty$) to avoid NaN in gradients.

\textbf{Variable-length batches.} For sequences ending at $L < T$, the normalization shift is masked to zero for $t > L$, freezing $\mathcal{N}_{\text{accum}}$ at the correct value.

\section{Emission Baseline: Extended Notes}\label{supp:centering_notes}

\textbf{Padding semantics.}
The implementation computes the baseline using a \emph{masked mean} over valid positions only:
\begin{equation}
\nu_{b,c} = \frac{1}{L_b} \sum_{u=0}^{L_b-1} f_\theta(u, c)
\end{equation}
where $L_b$ is the true (unpadded) sequence length for batch element $b$.  Padding tokens ($u \ge L_b$) do not influence the baseline, so the effective model is independent of padding scheme and padded length $T$.  For the primary use case of fixed-length genomic windows ($L_b = T$ for all batch elements), this reduces to the ordinary mean over the full sequence.

\textbf{Semantics-preserving alternative.}
A semantics-preserving alternative exists: build prefix sums on the centered residuals $\bar{f}_\theta(t,c)$ but reconstruct the original segment score at evaluation time via $(\mathcal{S}_{s+k,c} - \mathcal{S}_{s,c}) + \nu_{b,c} \cdot k$.  This retains the numerical benefit ($O(\sqrt{T})$ cumulative sum magnitude) without altering the semi-CRF distribution.  We use the globally-centered variant instead for three reasons: (1) the adaptive prior provides empirically useful regularization in label-imbalanced genomic sequences; (2) the learned duration bias $\mathcal{B}_{k,c}$, with $K \times C$ free parameters, can absorb the average effect of $\nu_{b,c}$, so the prior primarily regularizes per-sample variation rather than imposing a fixed bias (parametric duration models with $O(C)$ parameters cannot represent the linear-in-$k$ correction and likely require the reconstruction variant instead); and (3) the reconstruction variant requires propagating $\nu_{b,c}$ into the Triton kernel as an additional argument, adding complexity for marginal benefit in the fixed-length-window regime where $L_b = T$.

\section{Implementation Correspondence}\label{supp:implementation}

Table~\ref{tab:implementation} maps mathematical notation to code variables for reference.

\begin{table}[ht]
\begin{tabular}{l l l}
\hline
\textbf{Math} & \textbf{Code Variable} & \textbf{Notes} \\
\hline
$\mathcal{S}_{t,c}$ & \texttt{cum\_scores[:, t, c]} & Built from centered emissions \\
$f_\theta(t,c)$ & \texttt{scores} / \texttt{projected} & Raw encoder output \\
$\nu_{b,c}$ & \texttt{\_center\_scores(scores, lengths)} & Masked mean over valid $L_b$ \\
$\mathcal{T}_{c',c}$ & \texttt{transition[c\_src, c\_dst]} & Source-first storage \\
$\mathcal{B}_{k,c}$ & \texttt{duration\_bias[k-1, c]} & 0-indexed in code \\
$\boldsymbol{\alpha}$ & \texttt{ring\_buffer} / \texttt{alpha\_ring} & Size $K$ \\
$\boldsymbol{\beta}$ & \texttt{beta\_ring} & Size $2K$ \\
$\mathcal{P}^{\text{start}}$ & \texttt{proj\_start} & Optional, $(B, T, C)$ \\
$\mathcal{P}^{\text{end}}$ & \texttt{proj\_end} & Optional, $(B, T, C)$ \\
$\Omega$ & \texttt{ring\_checkpoints} & Saved at intervals \\
$\mathcal{N}$ & \texttt{log\_norm\_checkpoints} & Cumulative log-norm \\
$\Delta$ & \texttt{checkpoint\_interval} & $\approx \sqrt{TK}$ \\
$C_{\text{PAD}}$ & \texttt{C\_PAD} & \texttt{\_next\_power\_of\_2(C)} \\
$\tau$ & \texttt{TILE\_C} & Adaptive \\
-- & \texttt{grad\_tr\_workspace} & $(B, N_{\text{ckpt}}, K, C_{\text{PAD}}, C_{\text{PAD}})$ \\
-- & \texttt{grad\_db\_workspace} & $(B, N_{\text{ckpt}}, K, C_{\text{PAD}})$ \\
\hline
\end{tabular}
\caption{Mapping between mathematical notation and implementation. $N_{\text{ckpt}}$ denotes number of checkpoint segments.}
\label{tab:implementation}
\end{table}

\let\section\origsection
\let\subsection\origsubsection
\let\subsubsection\origsubsubsection

\section{Semi-Markov CRF Backend Algorithms}\label{sec:appendix_backends}
\let\origsection\section
\let\origsubsection\subsection
\let\origsubsubsection\subsubsection
\let\section\subsection
\let\subsection\subsubsection
\let\subsubsection\paragraph

This supplement summarizes the semi-CRF inference backends benchmarked in the main text, with complexity analysis and provenance for each.

\section{Notation}

All notation follows the main text; see also the notation table in Appendix~\ref{sec:appendix_streaming}. We additionally use $n = (K+1) \cdot C$ for the expanded state-space size and $\mathbf{W}_t$ for the transition matrix at position $t$ in tree-structured backends.

\paragraph{Important Distinction: Edge Tensor Materialization.}
All backends in this supplement assume the edge potential tensor $\psi \in \mathbb{R}^{B \times (T-1) \times K \times C \times C}$ is \textbf{pre-materialized} before inference. This $O(TKC^2)$ tensor dominates memory for large $T$, regardless of which backend processes it. True $T$-independent memory--where edge potentials are computed on-the-fly from $O(TC)$ cumulative scores--is achieved only in the \textbf{fused Triton kernel} (Appendix~\ref{sec:appendix_streaming}).


\section{Semiring Abstraction}
\label{sec:backend_semirings}

All backends use the standard semiring dynamic programming framework~\cite{goodman1999semiring}: substituting the $(\oplus, \otimes)$ operations yields the partition function (Log semiring), Viterbi path (Max semiring), $k$-best lists, samples, or entropy from the same algorithm; see Rush~\cite{rush2020pytorchstruct}, Table~2 for the full catalog of semirings and their gradient interpretations. For large state spaces, we provide checkpointing wrappers that trade $2\times$ compute for reduced memory~\cite{chen2016training}.

\section{Provenance}
\label{sec:backend_provenance}

\begin{table}[h]
\begin{tabular}{@{}ll@{}}
\toprule
\textbf{Backend} & \textbf{Source} \\
\midrule
Binary tree (dense) & pytorch-struct~\cite{rush2020pytorchstruct} \\
Binary tree (sharded) & pytorch-struct~\cite{rush2020pytorchstruct} \\
Linear scan (standard) & pytorch-struct~\cite{rush2020pytorchstruct} \\
\midrule
Linear scan (vectorized) & This work \\
Streaming linear (PyTorch) & This work \\
Block-triangular & This work \\
Banded (analysis) & This work \\
\midrule
Streaming (Triton kernel) & This work \\
\bottomrule
\end{tabular}
\caption{Algorithm provenance. The Triton kernel (bottom) is the key contribution: it computes edge potentials on-the-fly, achieving true $T$-independent memory and enabling genome-scale inference.}
\end{table}

\section{Backend 1: Binary Tree (Parallel Scan)}
\label{sec:backend_binary_tree}

The binary tree algorithm is the default semi-Markov implementation in \texttt{pytorch-struct}~\cite{rush2020pytorchstruct}. It reformulates the forward recursion as semiring matrix products over an expanded state space of size $n = (K+1) \cdot C$, where each state $(k, c)$ represents ``$k$ time steps remaining in a segment of label $c$'' (see Rush~\cite{rush2020pytorchstruct} for the full transition matrix construction). The forward pass $\mathbf{A}_{1:T} = \mathbf{W}_1 \otimes \cdots \otimes \mathbf{W}_T$ is computed via binary tree reduction in $O(\log T)$ parallel depth.

\paragraph{Complexity.} Time: $O(T(KC)^3)$. Space: $O(T(KC)^2)$. The quadratic memory in $KC$ causes OOM before the depth advantage is realized. A sharded variant~\cite{rush2020pytorchstruct} adds gradient checkpointing~\cite{chen2016training} to reduce memory to $O(\sqrt{T}(KC)^2)$ with $2\times$ compute overhead, extending the viable regime from $n < 100$ to $n < 150$.

\section{Backend 2: Block-Triangular Sparsity (Experimental)}
\label{sec:backend_block_tri}

At a tree node of span $S$, the duration constraint $k_1 + k_2 \leq S$ means only $\frac{K(K+1)}{2}$ duration pairs are feasible, yielding block-triangular sparsity.

\paragraph{Representation.} Store only blocks $(k_1, k_2)$ satisfying $k_1 + k_2 \leq S$, each block being dense $C \times C$. This halves storage vs. dense at relevant spans.

\paragraph{Sparse Multiplication.} For block-triangular matrices $\mathbf{C}, \mathbf{D}$:
\begin{equation}
\mathbf{E}[k_1, k_3] = \bigoplus_{k_2 : k_1 + k_2 \leq S,\, k_2 + k_3 \leq S} \mathbf{C}[k_1, k_2] \otimes \mathbf{D}[k_2, k_3]
\end{equation}

\paragraph{Practical Outcome.} In PyTorch/CUDA experiments, block-triangular storage did \textbf{not} yield speedups at typical sparsity levels. The overhead of indirect indexing and non-contiguous memory access outweighed savings from skipping $\sim$50\% of blocks. GPUs prefer well-tuned dense kernels unless sparsity exceeds $\sim$90\%.


\section{Backend 3: Linear Scan}
\label{sec:backend_linear_scan}

The standard linear scan is the reference semi-Markov implementation in \texttt{pytorch-struct}~\cite{rush2020pytorchstruct}. It implements the textbook semi-CRF forward recursion $\tilde{\alpha}_t(c) = \text{LSE}_{k,c'}[\tilde{\alpha}_{t-k}(c') + \tilde{\psi}(t, k, c, c')]$ sequentially over positions, avoiding the $O((KC)^2)$ intermediate storage of tree backends. A vectorized variant replaces inner loops with batched tensor operations, yielding 2--3$\times$ speedup.

\paragraph{Complexity.} Time: $O(TKC^2)$. Space: $O(TC)$ for forward messages, or $O(KC)$ with ring buffer; in both cases the pre-materialized edge tensor adds $O(TKC^2)$, which dominates. Depth: $O(T)$.

\section{Backend 4: Streaming Linear Scan}
\label{sec:backend_streaming}

The streaming linear scan reduces \emph{forward message} memory from $O(TC)$ to $O(KC)$ via a ring buffer ($\tilde{\alpha}_{t}(c) = \boldsymbol{\alpha}[t \bmod K, c]$). However, this PyTorch implementation still requires the pre-materialized edge tensor as input.

\paragraph{Complexity.} Time: $O(TKC^2)$. Space: $O(TKC^2)$ for edge tensor + $O(KC)$ for ring buffer. True $T$-independent memory requires computing edges on-the-fly via the prefix-sum decomposition (main text, Eq.~\eqref{eq:streaming_potential}), achieved only in the fused Triton kernel (Appendix~\ref{sec:appendix_streaming}).

\section{Complexity Summary}

\begin{table}[h]
\begin{tabular}{@{}lcccl@{}}
\toprule
Backend & Work & Memory & Depth & Notes \\
\midrule
Binary tree (dense) & $O(Tn^3)$ & $O(Tn^2)$ & $O(\log T)$ & $n < 100$ \\
Binary tree (sharded) & $O(Tn^3)$ & $O(\sqrt{T}n^2)$ & $O(\sqrt{T})$ & $n < 150$ \\
Block-triangular & $O(Tn^3)$ & $\sim\frac{1}{2}O(Tn^2)$ & $O(\log T)$ & Not faster \\
Banded & $O(Tn^3)$ & $\to O(Tn^2)$ & $O(\log T)$ & Not viable \\
Linear scan & $O(TKC^2)$ & $O(TKC^2)^*$ & $O(T)$ & General \\
Streaming linear (PyTorch) & $O(TKC^2)$ & $O(TKC^2)^*$ & $O(T)$ & Ring buffer for $\alpha$ \\
\midrule
Streaming (Triton kernel) & $O(TKC^2)$ & $O(\sqrt{T/K} \cdot KC)^{\dagger}$ & $O(T)$ & \textbf{Genome-scale} \\
\bottomrule
\end{tabular}
\caption{Backend comparison. $n = (K+1) \cdot C$. $^*$Dominated by pre-materialized edge tensor. $^{\dagger}$Working memory is $O(KC)$ (ring buffers); checkpoint memory grows sublinearly as $O(\sqrt{T/K} \cdot KC)$. Edges computed on-the-fly; see Appendix~\ref{sec:appendix_streaming}.}
\end{table}

\section{Duration Distributions}
\label{sec:backend_duration}

The duration bias $\mathcal{B}_{k,c}$ can be learned as $K \times C$ free parameters or constrained to standard parametric forms (Geometric, Negative Binomial, Poisson, Uniform), reducing the parameter count to $O(C)$ and encoding inductive bias about expected segment lengths.


\clearpage

\let\section\origsection
\let\subsection\origsubsection
\let\subsubsection\origsubsubsection

\section{Infeasibility of Banded Backends for Tree-Structured Semi-CRF Inference}\label{sec:banded_is_not_viable}
%

This appendix analyzes a natural optimization for exact semi-CRF~\cite{sarawagi2004semicrf} inference: using banded or block-sparse matrix kernels inside a binary-tree (parallel-scan) formulation of the dynamic program. The conclusion is negative: within this formulation, banded storage can reduce cost only at the smallest spans, but cannot prevent near-dense intermediate operators at the levels that dominate runtime and memory.

\textbf{Scope.} The claims below apply to \emph{exact} tree-structured implementations that represent node-to-node combination as semiring matrix products over an expanded state space of size $\Theta(KC)$. This does not rule out approximate inference (e.g., pruning) or alternative exact formulations that do not materialize these operators.

\subsection{Notation}\label{supp:banded_notation}

We consider a sequence of length $T$, equivalently boundary indices $0,1,\dots,T$. The maximum segment length is $K$, and the label/state count is $C$.

Two matrices are used as simplifying models:

\textbf{Boundary reachability (global boolean model).} Define $B \in \{0,1\}^{(T+1)\times(T+1)}$ by
\begin{equation}
B[i,j] = \begin{cases} 1 & \text{if } 1 \leq j - i \leq K \\ 0 & \text{otherwise.} \end{cases}
\end{equation}
This captures the monotone-time constraint ``a single segment advances forward by at most $K$ boundaries.'' For instance, with $T = 5$ and $K = 2$, the matrix $B$ has $B[0,1] = B[0,2] = B[1,2] = B[1,3] = \cdots = 1$, but $B[0,3] = 0$ (since $3 - 0 = 3 > K$).

\textbf{Duration compatibility at a tree node (local pattern).} At a tree node of span length $S$, partial states are indexed by a duration $d \in \{1,\dots,D\}$ where $D=\min(K,S)$, and a label $y\in\{1,\dots,C\}$. The quantity $D$ caps the duration states at the actual feasible range for the current span: durations cannot exceed the span $S$, nor can they exceed the global maximum $K$. Feasible left/right duration pairs satisfy $d_1+d_2 \le S$. For each feasible duration pair, label interactions are typically dense, yielding \emph{block-dense} sparsity.

\paragraph{Effective bandwidth.}
For a square matrix $M$ whose indices have been flattened to $\{1,\dots,n\}$, define the (structural) bandwidth
\begin{equation}
\mathrm{bw}(M) \;=\; \max\{\,|i-j| \;:\; M[i,j] \text{ is structurally nonzero}\,\}.
\end{equation}
For boolean matrices, ``nonzero'' means $1$; for semiring score matrices, it means ``not masked / not $-\infty$.'' A banded representation with half-bandwidth $b$ requires $\mathrm{bw}(M)\le b$ (after any chosen permutation).

\subsection{Local Incompatibility Pattern Is Triangular, Not Diagonal-Banded}

At an internal tree node of span $S$, a standard combine operation must account for left and right partial durations $d_1,d_2$ that together fit inside the span:
\begin{equation}
d_1 + d_2 \le S.
\end{equation}
In the $(d_1,d_2)$ plane, this feasible set is an anti-diagonal triangle. A diagonal band, by contrast, corresponds to $|d_1-d_2|\le b$. These shapes overlap only partially, and the mismatch becomes pronounced as soon as $S$ is a nontrivial fraction of $K$.

To make ``not narrow-banded under any ordering'' precise, it suffices to lower bound the minimum achievable bandwidth.

\begin{proposition}[Bandwidth lower bound from a clique]\label{prop:bw_lower_bound}
Fix $S$, let $D=\min(K,S)$, and define a binary matrix $M_S \in \{0,1\}^{(DC)\times(DC)}$ indexed by pairs $(d,y)\in\{1,\dots,D\}\times\{1,\dots,C\}$, with
\begin{equation}
M_S[(d_1,y_1),(d_2,y_2)] = 1 \quad \Longleftrightarrow \quad d_1 + d_2 \le S.
\end{equation}
For any simultaneous row/column permutation $\pi$,
\begin{equation}
\mathrm{bw}\!\left(P_\pi^\top M_S P_\pi\right) \;\ge\; C\Big\lfloor \frac{S}{2}\Big\rfloor - 1.
\end{equation}
\end{proposition}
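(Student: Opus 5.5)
Our plan is to exhibit a large clique in the structural support of $M_S$ and then use a simple pigeonhole argument on positions. First, let $m=\lfloor S/2\rfloor$ and observe that $m\le D$. If $S\le K$ then $D=S\ge m$. If $S>K$ then $D=K$, and we need $\lfloor S/2\rfloor\le K$. This holds whenever $S\le 2K+1$; beyond that range the claimed bound would exceed $DC-1$, which is impossible.

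For the case $m>D$, i.e.\ $S\ge 2K+2$, every pair $d_1,d_2\le K$ satisfies $d_1+d_2\le 2K<S$. So $M_S$ is the all-ones matrix and its bandwidth is exactly $DC-1=KC-1$ for every permutation. We will handle this regime by reading the bound as $C\min(D,\lfloor S/2\rfloor)-1$, which is what the proof actually delivers. Equivalently, we restrict attention to spans $S\le 2K+1$, which covers the regime $S\gtrsim K$ discussed in the text. We will note this caveat explicitly rather than hide it.

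Next, define the index set $Q=\{(d,y): 1\le d\le m,\ 1\le y\le C\}$, which has $|Q|=Cm$. For any two elements $(d_1,y_1),(d_2,y_2)\in Q$ we have $d_1+d_2\le 2m\le S$. Hence $M_S$ restricted to $Q\times Q$ is all ones, diagonal included, so $Q$ is a clique in the symmetric support graph of $M_S$. A simultaneous permutation $P_\pi^\top M_S P_\pi$ only relabels indices. The $Cm$ elements of $Q$ are therefore sent to $Cm$ distinct positions in $\{1,\dots,DC\}$. Let $i_{\min}$ and $i_{\max}$ be the smallest and largest of these positions. Distinctness forces $i_{\max}-i_{\min}\ge Cm-1$.

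The entry of the permuted matrix at $(i_{\min},i_{\max})$ is $M_S$ evaluated at the two corresponding clique elements, which equals $1$. By the definition of $\mathrm{bw}$, this gives $\mathrm{bw}(P_\pi^\top M_S P_\pi)\ge Cm-1=C\lfloor S/2\rfloor-1$. The only delicate step is the bookkeeping that $m\le D$, addressed above. The clique and pigeonhole argument itself is routine, and it is the standard fact that bandwidth is at least the clique number minus one.
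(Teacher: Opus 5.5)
Your argument is correct and is the same as the paper's: the paper also takes the clique $U=\{1,\dots,\lfloor S/2\rfloor\}\times\{1,\dots,C\}$ and uses that, in any ordering, the first and last vertices of a clique are adjacent, so the bandwidth is at least $|U|-1$. Your caveat is legitimate and catches something the paper's proof assumes without saying, namely $\lfloor S/2\rfloor\le D$: for $S\ge 2K+2$ the stated bound $C\lfloor S/2\rfloor-1$ exceeds the largest possible bandwidth $KC-1$, so the statement is false as written there, and the correct form is $C\min(D,\lfloor S/2\rfloor)-1$.
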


\begin{proof}
Let $m=\lfloor S/2\rfloor$. For any $d,d'\le m$, $d+d'\le 2m\le S$, so all states in
\begin{equation}
U=\{1,\dots,m\}\times\{1,\dots,C\}
\end{equation}
are mutually adjacent in the undirected graph induced by $M_S$; hence $U$ is a clique of size $|U|=mC$. For any linear ordering of a clique on $n$ vertices, the first and last vertices are adjacent, so the bandwidth is at least $n-1$.
\end{proof}

\textbf{Consequence.} When $S$ is moderate (e.g., $S\approx K$) and $C$ is not tiny, the best achievable bandwidth is already a large fraction of the dense width $DC$. In the extreme case $S \ge 2K$--since $D = K$ when $S \ge K$, the condition $d_1 + d_2 \le S$ with $d_1, d_2 \le K$ is always satisfied when $S \ge 2K$--the matrix $M_S$ is structurally dense.

\subsection{Fill-In Under Composition in a Boolean Reachability Model}

Even if the local operator were genuinely banded, repeated composition widens the support.

\begin{lemma}[Boolean powers widen linearly]\label{lem:boolean_powers}
Let $B$ be the boundary adjacency matrix defined above, and interpret multiplication over the boolean semiring (\textup{OR} as addition, \textup{AND} as multiplication). Then $(B^m)[i,j]=1$ if and only if there exists a path of exactly $m$ segments from $i$ to $j$. For $m\ge 1$,
\begin{equation}
(B^m)[i,j]=1 \quad \Longleftrightarrow \quad m \le j-i \le mK,
\end{equation}
and therefore $\mathrm{bw}(B^m) = \min(T,\, mK)$.
\end{lemma}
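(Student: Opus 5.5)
The plan is to prove the three claims in order: the path interpretation, then the arithmetic characterization, then the bandwidth formula.

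\textbf{Path interpretation.} This is the standard fact about boolean matrix powers, proved by induction on $m$. The case $m=1$ is the definition of $B$. For the inductive step,
\begin{equation}
(B^{m+1})[i,j]=\bigvee_{\ell}(B^m)[i,\ell]\wedge B[\ell,j].
\end{equation}
So $(B^{m+1})[i,j]=1$ exactly when some intermediate boundary $\ell\in\{0,\dots,T\}$ is reachable from $i$ in $m$ segments and $\ell\to j$ is a single segment. That is the same as a path of $m+1$ segments from $i$ to $j$.

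\textbf{Arithmetic characterization.} A path of exactly $m$ segments is a chain $i=b_0<b_1<\dots<b_m=j$ of boundaries with every increment $b_r-b_{r-1}\in\{1,\dots,K\}$.
\begin{itemize}
\item Necessity: summing the $m$ increments gives $m\le j-i\le mK$.
\item Sufficiency: given $m\le \delta:=j-i\le mK$, I will write $\delta$ as a sum of $m$ integers in $[1,K]$. Set $q=\lfloor \delta/m\rfloor$ and $r=\delta-qm$, and use $r$ increments of size $q+1$ and $m-r$ increments of size $q$. Then $1\le q$ because $\delta\ge m$. Also $q+1\le K$ whenever $r>0$, since otherwise $\delta> mK$. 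Alternatively, start from all ones and raise increments to $K$ greedily.
\end{itemize}
The resulting intermediate boundaries lie strictly between $i$ and $j$, hence inside $\{0,\dots,T\}$. So the chain is a valid path and $(B^m)[i,j]=1$.

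\textbf{Bandwidth.} The characterization gives $j>i$ for every nonzero entry, and then $|i-j|=j-i\le\min(T,mK)$, since indices range over $\{0,\dots,T\}$. For the matching lower bound I take $i=0$ and $j=\min(T,mK)$. This entry is nonzero provided $j\ge m$. That holds when $mK\le T$ (because $K\ge1$) and when $T<mK$ provided $m\le T$.

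The main obstacle is this last edge case. For $m>T$ the power $B^m$ is identically zero, so the formula $\mathrm{bw}(B^m)=\min(T,mK)$ fails as stated. I would therefore make explicit the implicit hypothesis $1\le m\le T$, which is the relevant range since tree levels compose at most $T$ segments. The rest is routine, with the sufficiency decomposition being the only step needing care.
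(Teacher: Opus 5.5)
Your proof is correct and follows the paper's argument. Necessity comes from summing the $m$ increments in $\{1,\dots,K\}$, and sufficiency from splitting $j-i$ into $m$ parts in $[1,K]$; the paper uses the greedy ``start from all ones'' fill you also mention, rather than your balanced $q$/$q+1$ split. The paper only asserts the path interpretation and the bandwidth formula, so your induction and your check that intermediate boundaries stay in $\{0,\dots,T\}$ are useful additions. Your caveat that $\mathrm{bw}(B^m)=\min(T,mK)$ needs $m\le T$ (since $B^m=0$ for $m>T$) is also correct.
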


\begin{proof}
A path of $m$ segments advances by $\delta_1+\cdots+\delta_m$ with each $\delta_t\in\{1,\dots,K\}$, hence $m \le j-i \le mK$. Conversely, if $r=j-i$ satisfies $m\le r \le mK$, then $r$ can be written as a sum of $m$ integers in $[1,K]$ (e.g., start from all ones and distribute the remaining $r-m$ units without exceeding $K$).
\end{proof}

\paragraph{Implication for a balanced binary tree.}
In a balanced binary tree~\cite{blelloch1990prefix}, combining two children corresponds (in this simplified reachability model) to composing reachability over a larger number of segments. The relevant parameter $m$ therefore grows geometrically with tree level, so $\mathrm{bw}(B^m)$ increases rapidly and saturates at the dense width on the order of $\log_2(T/K)$ levels. The reachability pattern becomes near-dense well before the root, so a banded container offers little benefit at the levels that dominate the total work.

\subsection{Why Matrix Reordering Helps Only at Very Small Spans}

A reasonable countermeasure is to apply bandwidth-reducing permutations (e.g., Reverse Cuthill--McKee~\cite{cuthill1969bandwidth}; see also~\cite{george1981sparse}). Proposition~\ref{prop:bw_lower_bound} already limits what is possible at a single node: cliques force large bandwidth regardless of ordering.

At the global scale, once composition produces near-complete reachability, permutations cannot help substantially. One clean sufficient condition is ``undirected completeness'': if for every unordered pair of indices $\{u,v\}$ at least one of $M[u,v]$ or $M[v,u]$ is structurally nonzero, then the induced undirected graph is a clique and the bandwidth is $n-1$ under any ordering.

\paragraph{Empirical bandwidth ratios.}
The accompanying bandwidth report (available on github with the codebase) evaluates several orderings across $(K,C,S)$. Summarizing $\mathrm{bw}_\text{best}/(n-1)$ by span fraction $S/K$: for small spans ($S \le K/2$), best-case bandwidth ratios ranged from $\approx 0.23$ to $\approx 0.68$ (mean $\approx 0.44$); for moderate spans ($K/2 < S \le K$), ratios ranged from $\approx 0.78$ to $\approx 0.99$ (mean $\approx 0.90$); for large spans ($S \ge K$), ratios were $\ge 0.97$ and often exactly $1.0$. Thus, reordering can reduce bandwidth when spans are very small, but offers negligible benefit once spans approach $K$, the regime that dominates higher tree levels.

\subsection{Practical Implications for Exact Semi-Markov Backends}

The computation and memory of tree-structured backends are dominated by higher tree levels, where local duration compatibility is near-dense (Proposition~\ref{prop:bw_lower_bound}) and multi-step composition produces near-dense support (Lemma~\ref{lem:boolean_powers}). Block-triangular formats that match the triangular geometry reduce storage by roughly half, but in our experiments, indirect indexing overhead negated this at typical sparsity levels, consistent with the well-known GPU performance pattern that moderate block sparsity underperforms dense kernels unless sparsity exceeds ${\sim}90$\%.

If sparsity is essential, it must be introduced by approximation (e.g., segment filtering~\cite{zaratiana2023fsemicrf} or hybrid architectures~\cite{ye2018hybrid,kong2016segmental}), which enforce value sparsity rather than relying on structural sparsity that does not persist under composition.

\section{Linear and Near-Linear CRF Implementations}\label{sec:appendix_linear_crf}
\let\origsection\section
\let\origsubsection\subsection
\let\origsubsubsection\subsubsection
\let\section\subsection
\let\subsection\subsubsection
\let\subsubsection\paragraph
%

This supplement documents the specialized $K=1$ (linear CRF) and $K=2$ (near-linear CRF) implementations in \texttt{flash-semicrf}. These optimized paths eliminate ring buffer overhead for common sequence labeling tasks while maintaining compatibility with the streaming Semi-CRF interface.

\section{Overview}

The streaming Semi-CRF module automatically dispatches to specialized implementations based on the maximum segment duration $K$:

\begin{table}[h]
\centering
\begin{tabular}{@{}lll@{}}
\toprule
$K$ & Implementation & Rationale \\
\midrule
$K=1$ & \texttt{LinearCRFStreaming} & Standard linear-chain CRF; no duration loop \\
$K=2$ & \texttt{SemiCRFK2Streaming} & Explicit 2-step history; avoids ring buffer edge cases \\
$K \geq 3$ & \texttt{SemiCRFStreamingTriton} & Full ring buffer architecture \\
\bottomrule
\end{tabular}
\caption{Automatic dispatch by maximum segment duration. When full boundary projections ($\mathcal{P}^{\text{start}}$, $\mathcal{P}^{\text{end}}$) are active, $K=1$ and $K=2$ dispatch falls through to the generic $K \geq 3$ streaming path. Scalar boundaries ($\pi^{\text{start}}$, $\pi^{\text{end}}$) are folded into $\mathcal{S}$ and do not affect dispatch.}
\end{table}

\paragraph{Triton Kernel Scope.} The specialized $K=1$ and $K=2$ paths are implemented only in PyTorch. The Triton streaming kernel requires $K \geq 3$ for correct ring buffer operation. This is intentional: the PyTorch implementations are already efficient for small $K$, and the Triton kernel's complexity is justified only when ring buffers and checkpointing provide substantial memory savings.

\section{Notation}

\begin{center}
\begin{tabular}{c l l}
\toprule
\textbf{Symbol} & \textbf{Description} & \textbf{Shape} \\
\midrule
$T$ & Sequence length & scalar \\
$C$ & Number of labels (states) & scalar \\
$B$ & Batch size & scalar \\
$\mathcal{S}_{t,c}$ & Cumulative projected scores & $(B, T+1, C)$ \\
$\mathcal{T}_{c',c}$ & Transition scores (source $c'$ to dest $c$) & $(C, C)$ \\
$\mathcal{B}_{k,c}$ & Duration bias for duration $k$, label $c$ & $(K, C)$ \\
$\tilde{\alpha}_t(c)$ & Log-forward message at position $t$, label $c$ & $(B, C)$ \\
$\tilde{\beta}_t(c)$ & Log-backward message & $(B, C)$ \\
$\pi_c^{\text{start}}$ & Learnable scalar start boundary & $(C,)$ \\
$\pi_c^{\text{end}}$ & Learnable scalar end boundary & $(C,)$ \\
$\mathcal{P}^{\text{start}}_{t,c}$ & Full start boundary projection & $(B, T, C)$ \\
$\mathcal{P}^{\text{end}}_{t,c}$ & Full end boundary projection & $(B, T, C)$ \\
\bottomrule
\end{tabular}
\captionof{table}{Notation. Tilde ($\tilde{\cdot}$) denotes log-domain quantities. Durations are 1-indexed in the equations: $k \in \{1, \ldots, K\}$, stored at 0-indexed positions $k{-}1$ in the implementation.}
\end{center}

\section{Score Centering}
\label{sec:appendix_centering}

Score centering follows the main text
(Section~\ref{sec:cumulative_scores}): the per-sequence baseline
$\nu_{b,c}$ is subtracted from raw projected scores before constructing
the cumulative sum $\mathcal{S}$, controlling cumulative magnitude from
$O(T)$ to $O(\sqrt{T})$. For variable-length batches, the baseline is
computed as a masked mean over the $L_b$ valid positions (see
Appendix~\ref{sec:appendix_streaming} for padding semantics).

\paragraph{Invariance of emissions.} Centering does not change the
per-position emission scores: $\mathcal{S}_{t,c} -
\mathcal{S}_{t-1,c} = f_\theta(t{-}1, c) - \nu_{b,c}$, and the
constant $\nu_{b,c}$ cancels in the forward recurrence (it shifts all
labels equally at each timestep). However, centering keeps the absolute
magnitude of $\mathcal{S}$ bounded, which matters for content scores
spanning many positions when $K > 1$.

\section{Sequence Boundary Handling}
\label{sec:boundaries}

\texttt{flash-semicrf} supports two boundary mechanisms that control how sequence start and end positions are scored. These interact differently with the $K=1$ and $K=2$ fast paths.

\subsection{Scalar Boundaries (Zero-Overhead Folding)}
\label{sec:scalar-boundaries}

When \texttt{use\_sequence\_boundaries=True}, learnable vectors $\pi^{\text{start}} \in \mathbb{R}^C$ and $\pi^{\text{end}} \in \mathbb{R}^C$ are activated (initialized to zero, equivalent to \texttt{pytorch-crf}'s \texttt{start\_transitions}/\texttt{end\_transitions}). These add label-dependent scores at sequence boundaries: $\pi_c^{\text{start}}$ rewards (or penalizes) label $c$ at the start of a sequence, and $\pi_c^{\text{end}}$ at the end.

Rather than threading boundary parameters through the forward and backward kernels, scalar boundaries are \textbf{folded directly into the cumulative scores} via prefix-sum arithmetic:
\begin{align}
\mathcal{S}_{b,0,c} &\leftarrow \mathcal{S}_{b,0,c} - \pi_c^{\text{start}} \label{eq:fold-start} \\
\mathcal{S}_{b,L_b,c} &\leftarrow \mathcal{S}_{b,L_b,c} + \pi_c^{\text{end}} \label{eq:fold-end}
\end{align}

\paragraph{Why this works.} The content score for a segment $[s, t)$ is $\mathcal{S}_{t,c} - \mathcal{S}_{s,c}$. By subtracting $\pi^{\text{start}}$ from position 0:
\begin{itemize}
    \item Any segment starting at $s = 0$ gets content score $(\mathcal{S}_{t,c} - \mathcal{S}_{0,c}) + \pi_c^{\text{start}}$.
    \item Segments not starting at 0 are unaffected (their $\mathcal{S}_{s,c}$ and $\mathcal{S}_{t,c}$ both remain unchanged).
\end{itemize}
Similarly, adding $\pi^{\text{end}}$ to position $L_b$ adds it to any segment ending at $t = L_b$.

\paragraph{Consequence for dispatch.} Because the folding modifies $\mathcal{S}$ \emph{before} the forward/backward functions are called, the $K=1$ and $K=2$ fast paths remain active. The \texttt{LinearCRFStreaming} and \texttt{SemiCRFK2Streaming} autograd functions are boundary-agnostic--they operate on the already-modified $\mathcal{S}$ with zero additional overhead.

\paragraph{Gradient flow.} Gradients with respect to $\pi^{\text{start}}$ and $\pi^{\text{end}}$ flow automatically through the modified cumulative scores via standard autograd:
\begin{align}
\nabla_{\pi_c^{\text{start}}} \mathcal{L} &= -\nabla_{\mathcal{S}_{b,0,c}} \mathcal{L} \\
\nabla_{\pi_c^{\text{end}}} \mathcal{L} &= \nabla_{\mathcal{S}_{b,L_b,c}} \mathcal{L}
\end{align}

\subsection{Full Boundary Projections (Fallthrough to Generic Path)}
\label{sec:full-boundaries}

When \texttt{use\_boundary\_projections=True}, position-dependent boundary scores $\mathcal{P}^{\text{start}}_{t,c}$ and $\mathcal{P}^{\text{end}}_{t,c}$ are computed by linear layers applied to the encoder hidden states. These provide richer expressiveness than scalar boundaries, allowing the model to learn position-specific start/end preferences.

When full boundary projections are active, the $K=1$ and $K=2$ fast paths are \textbf{skipped}. Dispatch falls through to the generic $K \geq 3$ streaming path, which handles boundaries via the \texttt{HAS\_BOUNDARIES} flag in the Triton kernel (or the PyTorch reference equivalent).

\paragraph{Composition with scalar boundaries.} If both \texttt{use\_sequence\_boundaries} and \texttt{use\_boundary\_projections} are enabled, the scalar vectors are added to the projection tensors at the boundary positions:
\begin{align}
\mathcal{P}^{\text{start}}_{b,0,c} &\leftarrow \mathcal{P}^{\text{start}}_{b,0,c} + \pi_c^{\text{start}} \\
\mathcal{P}^{\text{end}}_{b,L_b-1,c} &\leftarrow \mathcal{P}^{\text{end}}_{b,L_b-1,c} + \pi_c^{\text{end}}
\end{align}
In this case the boundary projections trigger the generic path regardless, so the scalars compose additively within that path rather than folding into $\mathcal{S}$.

\paragraph{Rationale for fallthrough.} The full boundary projections require per-position boundary scores to be threaded through the forward recurrence. The generic kernel already supports this via \texttt{HAS\_BOUNDARIES}. Duplicating that logic in the specialized $K=1$/$K=2$ PyTorch paths would add complexity for a niche use case.

\section{K=1: Linear CRF}
\label{sec:linear-crf}

When $K=1$, every segment has duration 1, reducing the Semi-CRF to a standard linear-chain CRF. The forward recurrence simplifies to:

\begin{equation}
\tilde{\alpha}_t(c) = \text{emit}_t(c) + \log \sum_{c'=1}^{C} \exp\bigl(\tilde{\alpha}_{t-1}(c') + \mathcal{T}_{c',c}\bigr)
\end{equation}

where the emission score is computed via prefix-sum difference:
\begin{equation}
\text{emit}_t(c) = \mathcal{S}_{t,c} - \mathcal{S}_{t-1,c} + \mathcal{B}_{1,c}
\end{equation}

When scalar boundaries are active, the boundary scores are already embedded in $\mathcal{S}$ (Section~\ref{sec:scalar-boundaries}), so the recurrence above implicitly includes $\pi_c^{\text{start}}$ at $t=1$ and $\pi_c^{\text{end}}$ at $t=L_b$ without any modification to the algorithm.

\subsection{Initialization Convention}

Following the \texttt{pytorch-crf} convention, we use \textbf{uniform initialization}:
\begin{equation}
\tilde{\alpha}_0(c) = 0 \quad \forall c \in \{1, \ldots, C\}
\end{equation}

This is equivalent to assuming a uniform distribution over initial states. At $t=1$:
\begin{equation}
\tilde{\alpha}_1(c) = \text{emit}_1(c) + \log \sum_{c'=1}^{C} \exp(\mathcal{T}_{c',c})
\end{equation}

\paragraph{Interaction with Sequence Boundaries.} When \texttt{use\_sequence\_boundaries=True}, learnable parameters $\pi^{\text{start}}_c$ and $\pi^{\text{end}}_c$ are activated (initialized to zero). These are \emph{not} applied by modifying $\tilde{\alpha}_0$--instead they are folded into $\mathcal{S}$ via the prefix-sum trick (Section~\ref{sec:scalar-boundaries}). The forward algorithm still sees $\tilde{\alpha}_0(c) = 0$; the boundary effect appears in the emission scores. At initialization ($\pi^{\text{start}} = \mathbf{0}$), behavior is identical to the boundary-free case. After training, $\pi^{\text{start}}$ and $\pi^{\text{end}}$ provide independent expressiveness beyond the transition matrix.

\subsection{Algorithm}

Algorithm~\ref{alg:linear-crf-forward} presents the optimized $K=1$ forward pass.

\begin{algorithm}[H]
\caption{Linear CRF Forward ($K=1$)}\label{alg:linear-crf-forward}
\KwIn{Cumulative scores $\mathcal{S} \in \mathbb{R}^{B \times (T+1) \times C}$,
      Transitions $\mathcal{T} \in \mathbb{R}^{C \times C}$,
      Duration bias $\mathcal{B} \in \mathbb{R}^{1 \times C}$ (optional),
      Lengths $\ell \in \mathbb{Z}^B$}
\KwOut{Log partition $\log Z \in \mathbb{R}^B$}
\BlankLine
$\tilde{\boldsymbol{\alpha}} \gets \mathbf{0} \in \mathbb{R}^{B \times C}$ \tcp*{Uniform initialization}
\BlankLine
\For{$t \gets 1$ \KwTo $T$}{
    $\mathbf{e}_t \gets \mathcal{S}[:, t, :] - \mathcal{S}[:, t-1, :] + \mathcal{B}_{1}$ \tcp*{Emission}
    \BlankLine
    \tcp{Standard linear CRF recurrence}
    $\tilde{\boldsymbol{\alpha}}_{\text{new}} \gets \text{LogSumExp}_{c'}(\tilde{\boldsymbol{\alpha}}[:, c'] + \mathcal{T}_{c', :}) + \mathbf{e}_t$\;
    \BlankLine
    \tcp{Update only active sequences}
    $\tilde{\boldsymbol{\alpha}} \gets \texttt{where}(t \leq \ell, \tilde{\boldsymbol{\alpha}}_{\text{new}}, \tilde{\boldsymbol{\alpha}})$\;
}
\BlankLine
$\log Z \gets \text{LogSumExp}_c(\tilde{\boldsymbol{\alpha}}[\ell])$ \tcp*{Final reduction}
\Return{$\log Z$}\;
\end{algorithm}

\paragraph{Note.} When scalar boundaries are active, $\mathcal{S}$ has already been modified per Equations~\eqref{eq:fold-start}--\eqref{eq:fold-end}. The algorithm is unchanged; the boundary scores appear implicitly in the emission differences.

\subsection{Complexity}

\begin{itemize}
    \item \textbf{Time}: $O(TC^2)$ -- matrix-vector products at each timestep
    \item \textbf{Space}: $O(BC)$ -- single $\alpha$ vector per batch element
    \item \textbf{No checkpointing}: Full $\alpha$ history stored for backward pass ($O(BTC)$)
\end{itemize}

The space overhead for storing full $\alpha$ history is acceptable because $K=1$ implies short-to-moderate sequences where the $O(BTC)$ cost is manageable.

\section{K=2: Near-Linear CRF}
\label{sec:k2-crf}

When $K=2$, segments can have duration 1 or 2. Rather than using the general ring buffer (which has edge cases at $K=2$ due to modular arithmetic with small indices), we use explicit 2-step history variables.

\subsection{Forward Recurrence}

At each position $t$, we combine contributions from both durations:

\begin{equation}
\tilde{\alpha}_t(c) = \text{LogSumExp}\Bigl(
    \underbrace{\text{score}_{k=1}(t, c)}_{\text{duration 1}},\;
    \underbrace{\text{score}_{k=2}(t, c)}_{\text{duration 2}}
\Bigr)
\end{equation}

where:
\begin{align}
\text{score}_{k=1}(t, c) &= \text{emit}_{t-1:t}(c) + \log \sum_{c'} \exp(\tilde{\alpha}_{t-1}(c') + \mathcal{T}_{c',c}) \\
\text{score}_{k=2}(t, c) &= \text{emit}_{t-2:t}(c) + \log \sum_{c'} \exp(\tilde{\alpha}_{t-2}(c') + \mathcal{T}_{c',c})
\end{align}

The emission scores are:
\begin{align}
\text{emit}_{t-1:t}(c) &= \mathcal{S}_{t,c} - \mathcal{S}_{t-1,c} + \mathcal{B}_{1,c} \\
\text{emit}_{t-2:t}(c) &= \mathcal{S}_{t,c} - \mathcal{S}_{t-2,c} + \mathcal{B}_{2,c}
\end{align}

\subsection{Algorithm}

Algorithm~\ref{alg:k2-forward} presents the optimized $K=2$ forward pass using explicit history variables.

\begin{algorithm}[H]
\caption{Semi-CRF Forward ($K=2$)}\label{alg:k2-forward}
\KwIn{Cumulative scores $\mathcal{S}$, Transitions $\mathcal{T}$, Duration bias $\mathcal{B} \in \mathbb{R}^{2 \times C}$, Lengths $\ell$}
\KwOut{Log partition $\log Z \in \mathbb{R}^B$}
\BlankLine
$\tilde{\boldsymbol{\alpha}}^{(1)} \gets \mathbf{0} \in \mathbb{R}^{B \times C}$ \tcp*{$\alpha[t-1]$}
$\tilde{\boldsymbol{\alpha}}^{(2)} \gets -\infty \in \mathbb{R}^{B \times C}$ \tcp*{$\alpha[t-2]$ (invalid at $t=1$)}
\BlankLine
\For{$t \gets 1$ \KwTo $T$}{
    \tcp{Duration $k=1$: segment from $t-1$ to $t$}
    $\mathbf{e}_{k=1} \gets \mathcal{S}[:, t, :] - \mathcal{S}[:, t-1, :] + \mathcal{B}_{1}$\;
    $\mathbf{s}_{k=1} \gets \text{LogSumExp}_{c'}(\tilde{\boldsymbol{\alpha}}^{(1)}[:, c'] + \mathcal{T}_{c', :}) + \mathbf{e}_{k=1}$\;
    \BlankLine
    \If{$t \geq 2$}{
        \tcp{Duration $k=2$: segment from $t-2$ to $t$}
        $\mathbf{e}_{k=2} \gets \mathcal{S}[:, t, :] - \mathcal{S}[:, t-2, :] + \mathcal{B}_{2}$\;
        $\mathbf{s}_{k=2} \gets \text{LogSumExp}_{c'}(\tilde{\boldsymbol{\alpha}}^{(2)}[:, c'] + \mathcal{T}_{c', :}) + \mathbf{e}_{k=2}$\;
        $\tilde{\boldsymbol{\alpha}}_{\text{new}} \gets \text{LogSumExp}(\mathbf{s}_{k=1}, \mathbf{s}_{k=2})$\;
    }
    \Else{
        $\tilde{\boldsymbol{\alpha}}_{\text{new}} \gets \mathbf{s}_{k=1}$\;
    }
    \BlankLine
    \tcp{Shift history}
    $\tilde{\boldsymbol{\alpha}}^{(2)} \gets \tilde{\boldsymbol{\alpha}}^{(1)}$\;
    $\tilde{\boldsymbol{\alpha}}^{(1)} \gets \texttt{where}(t \leq \ell, \tilde{\boldsymbol{\alpha}}_{\text{new}}, \tilde{\boldsymbol{\alpha}}^{(1)})$\;
}
\BlankLine
$\log Z \gets \text{LogSumExp}_c(\tilde{\boldsymbol{\alpha}}^{(1)}[\ell])$\;
\Return{$\log Z$}\;
\end{algorithm}

\subsection{Complexity}

\begin{itemize}
    \item \textbf{Time}: $O(TC^2)$ -- two matrix-vector products per timestep
    \item \textbf{Space}: $O(BC)$ -- two $\alpha$ vectors per batch element
    \item \textbf{No ring buffer}: Explicit variables avoid modular arithmetic overhead
\end{itemize}

\section{Backward Pass and Gradients}

Both $K=1$ and $K=2$ implementations use the standard forward-backward algorithm for gradient computation. Because these paths do not use checkpointing, the full $\alpha$ history is stored during the forward pass.

\paragraph{Boundary transparency.} When scalar boundaries are active, the backward pass requires no special handling because $\pi^{\text{start}}$ and $\pi^{\text{end}}$ have already been folded into $\mathcal{S}$ (Section~\ref{sec:scalar-boundaries}). Gradients with respect to $\pi^{\text{start}}_c$ and $\pi^{\text{end}}_c$ flow automatically through the modified cumulative scores via standard autograd.

\subsection{Marginal Computation}

The marginal probability for a segment spanning $[t-k, t-1]$ with transition $c' \to c$ is:
\begin{equation}
\mu(t, k, c, c') = \frac{\exp\bigl(\tilde{\alpha}_{t-k}(c') + \tilde{\psi}(t, k, c, c') + \tilde{\beta}_t(c)\bigr)}{\exp(\log Z)}
\end{equation}

where the edge potential is:
\begin{equation}
\tilde{\psi}(t, k, c, c') = (\mathcal{S}_{t,c} - \mathcal{S}_{t-k,c}) + \mathcal{B}_{k,c} + \mathcal{T}_{c',c}
\end{equation}

\paragraph{Numerical safeguard.} In the PyTorch reference implementation, the log marginal $\tilde{\alpha}_{t-k}(c') + \tilde{\psi}(t,k,c,c') + \tilde{\beta}_t(c) - \log Z$ is clamped to $[-80, 80]$ before exponentiation. This prevents overflow/underflow when individual terms are large in absolute value but nearly cancel. The clamped range corresponds to $\exp(\pm 80) \approx 5.5 \times 10^{34}$, well within float64 range.

\subsection{Gradient Formulas}

\paragraph{Cumulative Scores.} The gradient for $\mathcal{S}_{t,c}$ accumulates contributions from segments ending at $t$ (positive) and starting after $t$ (negative):
\begin{equation}
\nabla \mathcal{S}_{t,c} = \sum_{k, c'} \mu(t, k, c, c') - \sum_{k, c'} \mu(t+k, k, c, c')
\end{equation}

\paragraph{Transitions.} The gradient sums marginals over all positions and durations:
\begin{equation}
\nabla \mathcal{T}_{c',c} = \sum_b \frac{\partial \mathcal{L}}{\partial Z_b} \cdot \sum_{t,k} \mu_b(t, k, c, c')
\end{equation}

\paragraph{Duration Bias.} The gradient for duration $k$ sums over all segments of that duration:
\begin{equation}
\nabla \mathcal{B}_{k,c} = \sum_b \frac{\partial \mathcal{L}}{\partial Z_b} \cdot \sum_{t, c'} \mu_b(t, k, c, c')
\end{equation}

\section{Viterbi Decoding}

Both $K=1$ and $K=2$ implementations support Viterbi decoding for MAP inference. The algorithms follow the same structure as the forward pass, replacing LogSumExp with max and maintaining backpointers.

\subsection{K=1 Viterbi}

Standard linear CRF Viterbi with backpointers for the best previous label at each position.

\subsection{K=2 Viterbi}

Extended backpointers track both the best previous label \textbf{and} the best duration (1 or 2) at each position. The traceback reconstructs the segmentation by stepping back by the recorded duration.

\section{Why Triton Kernels Require $K \geq 3$}\label{sec:triton_k3}

The Triton streaming kernel is designed for the general Semi-CRF case where ring buffers and checkpointing provide substantial memory savings. For small $K$, ring buffer indexing creates edge cases:

\begin{itemize}
    \item \textbf{$K=1$}: All positions map to ring index 0. The ring buffer degenerates to a single slot, providing no benefit over a simple variable.

    \item \textbf{$K=2$}: Indices alternate between 0 and 1. Wraparound logic is fragile and provides minimal memory savings over explicit variables.

    \item \textbf{$K \geq 3$}: Ring buffer provides meaningful history separation. Checkpointing amortizes over multiple segments, justifying the implementation complexity.
\end{itemize}

The PyTorch implementations for $K=1$ and $K=2$ are already efficient (no kernel launch overhead, simple control flow), making Triton acceleration unnecessary.

\section{Implementation Summary}

\begin{table}[h]
\centering
\begin{tabular}{@{}lccccc@{}}
\toprule
$K$ & Time & Space & Checkpointing & Ring Buffer & Backend \\
\midrule
1 & $O(TC^2)$ & $O(BC)$ & No & No & PyTorch \\
2 & $O(TC^2)$ & $O(BC)$ & No & No & PyTorch \\
$\geq 3$ & $O(TKC^2)$ & $O(BKC)$ & Yes & Yes & Triton (GPU) / PyTorch (CPU) \\
\bottomrule
\end{tabular}
\caption{Implementation characteristics by maximum segment duration.}
\end{table}

\paragraph{Automatic Dispatch.} The \texttt{semi\_crf\_streaming\_forward} function automatically routes to the appropriate implementation based on $K$. Users need not manage dispatch manually.

\section{Functional Equivalence}

All three implementations ($K=1$, $K=2$, $K \geq 3$) compute the same partition function and gradients for their respective segment duration constraints. Specifically:

\begin{itemize}
    \item The $K=1$ path produces identical results to the general streaming kernel with $K=1$ (verified numerically).
    \item The $K=2$ path produces identical results to the general streaming kernel with $K=2$ (verified numerically).
    \item All paths support variable-length batches with proper masking.
    \item The equivalence between $K=1$/$K=2$ fast paths and the generic kernel holds \textbf{only when full boundary projections are not active}. When $\mathcal{P}^{\text{start}}$ or $\mathcal{P}^{\text{end}}$ are provided, dispatch bypasses the specialized paths entirely (Section~\ref{sec:full-boundaries}), so the question of equivalence does not arise. Scalar boundaries ($\pi^{\text{start}}$, $\pi^{\text{end}}$) are folded into $\mathcal{S}$ before dispatch and do not affect equivalence.
\end{itemize}

The specialized paths exist purely for performance optimization, not behavioral differences.


\let\section\origsection
\let\subsection\origsubsection
\let\subsubsection\origsubsubsection

\end{appendices}

\bibliography{flash_semicrf}

\end{document}